\pdfoutput=1
\documentclass{article}

\usepackage[preprint]{neurips_2026}

\newif\ifincludeneuripschecklist
\includeneuripschecklistfalse

\usepackage[utf8]{inputenc}
\usepackage[T1]{fontenc}
\usepackage{graphicx}
\usepackage{amsmath, amssymb, amsthm}
\usepackage{mathtools}
\usepackage{bm}
\usepackage{booktabs}
\usepackage{hyperref}
\usepackage{cleveref}
\usepackage{xcolor}
\usepackage{url}

\theoremstyle{plain}
\newtheorem{theorem}{Theorem}
\newtheorem{proposition}[theorem]{Proposition}
\newtheorem{lemma}[theorem]{Lemma}
\newtheorem{corollary}[theorem]{Corollary}

\theoremstyle{definition}
\newtheorem{definition}[theorem]{Definition}

\theoremstyle{remark}
\newtheorem{remark}[theorem]{Remark}

\newcommand{\R}{\mathbb{R}}
\newcommand{\E}{\mathbb{E}}
\newcommand{\Prob}{\mathbb{P}}
\newcommand{\Agents}{\mathcal{A}}        % agent set
\newcommand{\Normal}{\mathcal{N}}        % Gaussian distribution
\newcommand{\Obs}{\mathcal{O}}           % observation space
\newcommand{\Rollouts}{\mathcal{B}}      % rollout observation set
\newcommand{\Pairs}{\mathcal{P}}         % set of unordered pairs
\newcommand{\policy}[1]{\pi_{\theta_{#1}}}
\newcommand{\Wass}{W_2}
\newcommand{\SND}{\ensuremath{\mathrm{SND}}}
\newcommand{\SNDG}{\mathrm{SND}_{G}}

\newcommand{\abs}[1]{\left\lvert #1 \right\rvert}

\newcommand{\ind}{\mathbf{1}}

\newcommand{\codelocation}{Code: \url{https://github.com/shawnray-research/Graph-SND}.}

\title{Graph-SND: Sparse Aggregation for Behavioral Diversity in
Multi-Agent Reinforcement Learning}

\author{%
  Shawn Ray \\
  Carnegie Mellon University \\
  \texttt{shawnray@cmu.edu} \\
}

\begin{document}
\maketitle

\begin{abstract}
System Neural Diversity (SND) measures behavioral heterogeneity in
multi-agent reinforcement learning by averaging pairwise distances over
all $\binom{n}{2}$ agent pairs, making each call quadratic in team size.
We introduce Graph-SND, which replaces this complete-graph average with
a weighted average over the edges of an arbitrary graph $G$. Three
regimes follow: $G=K_n$ recovers SND exactly; a fixed sparse $G$ defines
a localized diversity measure at $O(|E|)$ cost; and random edge samples
yield an unbiased Horvitz-Thompson estimator and a normalized sample
mean with $O(1/\sqrt{m})$ concentration in the sampled edge count $m$.
For fixed sparse graphs we prove forwarding-index distortion bounds for
expanders and a spectral refinement under low-rank distance structure;
for random $d$-regular graphs we prove an unconditional probabilistic
$\widetilde{\mathcal{O}}(D_{\max}/\sqrt{n})$ bound. On VMAS we verify
recovery, unbiasedness, concentration, and wall-clock scaling, with a
PettingZoo TVD panel checking non-Gaussian transfer. In a 500-iteration
$n=100$ PPO run, Bernoulli-$0.1$ Graph-SND tracks full SND while
reducing per-call metric time by about $10\times$, and frozen-policy
GPU timing up to $n=500$ follows the predicted $\binom{n}{2}/|E|$
speedup. Random $d$-regular expanders empirically achieve
$\SNDG^{\mathrm{u}}/\SND\in[0.9987,1.0013]$ at $\Theta(n\log n)$ edges.
In DiCo diversity control at $n=50$, Bernoulli-$0.1$ Graph-SND preserves
set-point tracking with paired reward differences indistinguishable from
zero across nine matched cells while cutting per-call metric cost by
${\sim}9.5\times$. Together, these results show that the SND aggregation
bottleneck can be removed without changing the metric's semantics,
yielding a drop-in sparse alternative that scales beyond complete-graph
SND and supports both passive measurement and closed-loop diversity
control.
\end{abstract}
% ============================================================
\section{Introduction}
\label{sec:intro}

Behavioral diversity helps multi-agent reinforcement learning (MARL)
systems specialize, remain robust, and expose capabilities not visible in
reward alone \citep{bettini2023hetgppo,bettini2025snd}. System Neural
Diversity (SND) \citep{bettini2025snd} measures this heterogeneity by
averaging pairwise Wasserstein distances between all agents' action
distributions and enables diversity control through DiCo
\citep{bettini2024dico}. Its bottleneck is aggregation: every call uses
all $\binom{n}{2}$ agent pairs, and \citet[\S6]{bettini2025snd} identify
subgroup-to-system aggregation as open.

We introduce Graph-SND, which replaces SND's complete-graph average by a
weighted average over the edges of a graph $G$. The complete graph recovers
SND exactly. A fixed graph, such as a communication or nearest-neighbor
graph, defines a localized diversity measure over task-relevant
interactions. A random graph defines sparse estimators of full SND,
including an unconditionally unbiased Horvitz-Thompson estimator and a
normalized sample mean with finite-population concentration. This isolates
the aggregation bottleneck while leaving the policy architecture, training
algorithm, and pairwise distance unchanged.

\paragraph{Contributions.}
(i) We define Graph-SND (Definition~\ref{def:graph-snd}), a sparse
aggregation layer that recovers SND on $K_n$ and costs $O(|E|)$ on a
precomputed graph. (ii) We prove recovery, non-negativity,
graph-automorphism invariance, Horvitz-Thompson unbiasedness,
conditional concentration, forwarding-index distortion bounds
\citep{heydemann1989forwarding,leighton1999multicommodity}, a
spectral nuclear-norm refinement, and an unconditional probabilistic
$\widetilde{\mathcal{O}}(D_{\max}/\sqrt{n})$ bound for random
$d$-regular graphs. (iii) VMAS experiments
\citep{bettini2022vmas} verify recovery, concentration, and wall-clock
scaling through $n=500$, while PettingZoo provides a discrete-action
TVD transfer check; a 500-iteration $n=100$ PPO run
shows Bernoulli-$0.1$ Graph-SND tracking full SND at about $10\times$
lower metric cost. (iv) Random $d$-regular expanders achieve
$\SNDG^{\mathrm{u}}/\SND\in[0.9987,1.0013]$ at
$\Theta(n\log n)$ edges, and DiCo at $n=50$ shows Bernoulli-$0.1$
Graph-SND preserves set-point tracking and yields paired reward differences indistinguishable from zero across nine matched cells while reducing per-call metric cost by ${\sim}9.5\times$.

% ============================================================
\section{Related Work}
\label{sec:related}

\paragraph{Diversity measures in multi-agent RL.}
Prior metrics compare policy distributions
\citep{mckee2022diversity,yu2021informative,hu2022policy}, occupancy
measures \citep{liu2021unifying}, deterministic action diversity
\citep{parker-holder2020effective}, or trajectory distributions
\citep{lupu2021trajedi}. SND \citep{bettini2025snd} is closest: it is a
metric on concurrently acting Gaussian policies and is used by DiCo
\citep{bettini2024dico}, measurement studies
\citep{bettini2024impact,tessera2024hypermarl}, and diversity-benefit
analysis \citep{amir2025diversity}. These works treat the all-pairs
average as fixed. Graph-SND changes only this aggregation layer.

\paragraph{Heterogeneity in robotics.}
Robotics heterogeneity is often structural rather than behavioral:
species-trait matrices \citep{prorok2017impact}, performance proxies
\citep{li2004learning}, and fixed-class counts \citep{twu2014measure}.
Hierarchic Social Entropy \citep{balch2000hse} is closer, but
\citet{bettini2025snd} show it does not capture redundancy in the same
way as SND. Graph-SND inherits SND's complete-graph behavior and adds a
structured sparse alternative.

\paragraph{Graphs in multi-agent learning, and subgroup partitioning.}
Graphs in MARL usually appear inside policies as communication or
message-passing structure
\citep{sukhbaatar2016learning,blumenkamp2021emergence,bettini2023hetgppo}
or as role/credit inductive bias \citep{wang2020roma,wang2021rode}. Our
graph is external: it selects which pairwise behavioral distances enter
the scalar diversity aggregate. Subgroup partitions proposed by
\citet[\S6]{bettini2025snd} are a special case (block-diagonal unions of
cliques), while $k$-NN, communication, Bernoulli, and expander graphs are
not representable as simple partitions.

\paragraph{Sampling-based approximations, and follow-up work on SND.}
The random-$G$ view uses Horvitz-Thompson estimation
\citep{horvitz1952generalization}, Hoeffding's finite-population
concentration \citep{hoeffding1948class,hoeffding1963probability}, and
Serfling-style refinements \citep{serfling1974probability,bardenet2015concentration}.
The fixed-expander view connects MARL diversity aggregation to edge
forwarding index and multicommodity-flow structure
\citep{chung1987forwarding,heydemann1989forwarding,leighton1999multicommodity},
which prior pairwise diversity metrics do not exploit.

% ============================================================
\section{Background}
\label{sec:background}

\subsection{Partially Observable Markov Games}
\label{sec:background:pomg}

We work in cooperative Partially Observable Markov Games
\citep{kaelbling1998planning} with agents
$\Agents=\{1,\ldots,n\}$, private observations, shared dynamics, and
heterogeneous stochastic policies $\policy{i}$. Agents share observation
and action spaces so their action distributions can be compared at the
same observation; heterogeneous spaces would require an alignment map and
are outside this paper's scope. The diversity metrics are computed on
rollout data from independently parameterized policies, as in
\citet{bettini2023hetgppo,bettini2025snd}.

\subsection{System Neural Diversity}
Following \citet{bettini2025snd}, let $\Agents = \{1, \dots, n\}$ be the
agent set and let each agent $i \in \Agents$ have a Gaussian policy
$\policy{i}(o) = \Normal\bigl(\mu_{\theta_i}(o),\, \Sigma_{\theta_i}(o)\bigr)$.
Let $\Rollouts = \{o^t\}_{t=1}^T$ be a collection of joint observation
vectors sampled from policy rollouts, where each
$o^t = (o_1^t,\dots,o_n^t) \in \Obs^n$.
The Monte Carlo behavioral distance between agents $i$ and $j$ is
\begin{equation}
\label{eq:dij}
d(i,j) \;=\; \frac{1}{|\Rollouts|\,|\Agents|}
\sum_{o^t \in \Rollouts} \sum_{k \in \Agents}
\Wass\bigl(\policy{i}(o^t_k),\, \policy{j}(o^t_k)\bigr),
\end{equation}
where $\Wass$ admits a closed form on multivariate Gaussians
\citep{bettini2025snd}. The global System Neural Diversity is
\begin{equation}
\label{eq:snd}
\SND(\mathbf{D}) \;=\; \binom{n}{2}^{-1} \sum_{i<j} d(i,j),
\end{equation}
the mean of pairwise behavioral distances over unique agent pairs. We work
with~\eqref{eq:dij} directly rather than its integral form, avoiding
specification of a measure on $\Obs$; all our results are stated at the
level of the sampled estimator. Throughout, we treat the values
$\{d(i,j)\}_{\{i,j\} \in \Pairs}$ as fixed (deterministic in $\mathbf{D}$),
so that all stochasticity in the estimators below arises solely from the
random graph.

% ============================================================
\section{Graph-SND}
\label{sec:method}

Full SND weights every pair equally. This is both computationally
prohibitive (quadratic in $n$) and, in many settings, semantically
imprecise: agents that do not interact or operate on disjoint sub-tasks
contribute to the score on equal footing with coupled pairs. We introduce a
graph-structured generalization that supports two distinct use cases: a
localized diversity measure when $G$ encodes interaction structure, and
sampling-based estimators of $\SND$ when edges of $G$ are drawn from the
pair set.

\subsection{Definition}

Let $\Pairs := \binom{\Agents}{2}$ be the set of unordered agent pairs,
and let $G = (\Agents, E, w)$ be an undirected graph on the agent set with
$E \subseteq \Pairs$ and non-negative edge weights $w : E \to \R_{\geq 0}$.
Write $W(G) = \sum_{\{i,j\} \in E} w_{ij}$ and assume $W(G) > 0$. The graph
may be fixed (e.g., communication topology, $k$-NN on observations,
$\varepsilon$-ball) or sampled from pairs
(Section~\ref{sec:method:estimator}).

\begin{definition}[Graph-SND]
\label{def:graph-snd}
The Graph-SND of policies $\{\policy{i}\}_{i \in \Agents}$ on $G$ is
\begin{equation}
\label{eq:graph-snd}
\SNDG(\mathbf{D}, G) \;=\; \frac{1}{W(G)}
\sum_{\{i,j\} \in E} w_{ij} \, d(i,j),
\end{equation}
with $d(i,j)$ given by~\eqref{eq:dij}.
\end{definition}

When $G = K_n$ with unit weights, $W(G) = \binom{n}{2}$ and
$\SNDG(\mathbf{D}, K_n) = \SND(\mathbf{D})$ identically.

\subsection{Canonical graph families}

Three choices of $G$ recover interpretable special cases.
\begin{itemize}
  \item \textbf{Complete graph}, $G = K_n$, $w \equiv 1$: recovers $\SND$.
  \item \textbf{$k$-nearest-neighbor graph} in observation space:
  $|E| = O(nk)$, yielding a $\Theta(n/k)$ reduction in the number of
  pairwise distance evaluations relative to full $\SND$.
  \item \textbf{Communication graph} inherited from the policy's message-passing topology: aligns measured diversity with the interaction
  structure actually used by the agents.
\end{itemize}
In each case $G$ may be fixed or state-dependent; the latter requires
re-evaluating $\SNDG$ per rollout batch.

For any weighted graph $G' = (\Agents', E', w')$ with $W(G') = 0$, we define
\[
\SNDG(\mathbf{D}, G') := 0
\]
by convention.

% ============================================================
\section{Theoretical Analysis}
\label{sec:theory}

We state Graph-SND's core properties, then give a probabilistic
interpretation as sampling-based estimators of $\SND$ with concentration
guarantees. Proofs are in Appendix~\ref{app:proofs}.

\subsection{Core properties}

\begin{proposition}[Recovery on the complete graph]
\label{prop:recovery}
If $G = K_n$ with unit edge weights, then
$\SNDG(\mathbf{D}, G) = \SND(\mathbf{D})$.
\end{proposition}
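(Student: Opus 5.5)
The plan is a direct unfolding of Definition~\ref{def:graph-snd} and~\eqref{eq:snd}; no earlier result beyond these definitions is needed. First, identify the edge set and weights of $K_n$. By definition the complete graph on $\Agents$ has edge set $E = \Pairs = \binom{\Agents}{2}$, so each unordered pair $\{i,j\}$ with $i \neq j$ appears exactly once. With unit weights $w_{ij} = 1$, the total weight is $W(K_n) = |\Pairs| = \binom{n}{2}$. This is positive as soon as $n \geq 2$, so the standing assumption $W(G) > 0$ holds and the degenerate convention for $W(G')=0$ is never invoked.

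Second, substitute into~\eqref{eq:graph-snd}:
\[
\SNDG(\mathbf{D}, K_n) = \binom{n}{2}^{-1} \sum_{\{i,j\} \in \Pairs} d(i,j).
\]
Then reindex the sum over unordered pairs as a sum over ordered pairs with $i<j$. The map $\{i,j\} \mapsto (\min\{i,j\}, \max\{i,j\})$ is a bijection from $\Pairs$ onto $\{(i,j) : 1 \le i < j \le n\}$. For this reindexing to be legitimate, each summand must not depend on the chosen orientation, which means $d$ must be symmetric. Symmetry follows from~\eqref{eq:dij}: $\Wass$ is a metric and hence symmetric in its two arguments, so $d(i,j) = d(j,i)$. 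Consequently $d$ is well defined on unordered pairs, which the definition of Graph-SND already implicitly requires.

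After the reindexing the expression coincides termwise with~\eqref{eq:snd}, which gives the claim. There is no genuine obstacle here. The only step deserving a sentence of care is the symmetry of $d$, because it is what makes the unordered-pair sum in~\eqref{eq:graph-snd} agree with the $i<j$ sum in~\eqref{eq:snd}. I would also state explicitly that $n \ge 2$ is assumed so that both sides are defined.
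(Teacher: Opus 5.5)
Your proposal is correct and matches the paper's proof: set $E=\Pairs$ and $W(K_n)=\binom{n}{2}$, substitute into \eqref{eq:graph-snd}, and rewrite the unordered-pair sum as the $i<j$ sum in \eqref{eq:snd}. Your remarks on the symmetry of $d$ and on needing $n\ge 2$ are valid points that the paper leaves implicit.
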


\begin{proposition}[Non-negativity]
\label{prop:nonneg}
$\SNDG(\mathbf{D}, G) \geq 0$, with equality if and only if $d(i,j) = 0$
for every edge $\{i,j\} \in E$ with $w_{ij} > 0$.
\end{proposition}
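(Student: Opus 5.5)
The plan is to read off non-negativity from the sign of each summand in \eqref{eq:graph-snd}, and then characterize the equality case by the elementary fact that a sum of non-negative terms is zero exactly when every term is zero.

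\textbf{Non-negativity of each distance.} First I will check that every pairwise distance is non-negative. By \eqref{eq:dij}, $d(i,j)$ is a positive multiple, namely $1/(|\Rollouts|\,|\Agents|)$, of a finite sum of Wasserstein distances $\Wass\bigl(\policy{i}(o^t_k),\policy{j}(o^t_k)\bigr)$. Each of these is non-negative, being a metric on Gaussian distributions. Hence $d(i,j)\ge 0$ for every pair.

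\textbf{Non-negativity of the aggregate.} The edge weights satisfy $w_{ij}\ge 0$, so each product $w_{ij}\,d(i,j)$ is non-negative. Consider first the standing assumption $W(G)>0$. Multiplying the sum of these products by the positive factor $1/W(G)$ gives $\SNDG(\mathbf{D},G)\ge 0$. In the degenerate case $W(G)=0$, the value is $0$ by convention and the inequality is trivial. In that case every weight is zero, so no edge has $w_{ij}>0$ and the equality condition holds vacuously; both sides of the ``iff'' are therefore true.

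\textbf{Equality case.} Assume $W(G)>0$. Then $\SNDG=0$ holds if and only if $\sum_{\{i,j\}\in E} w_{ij}\,d(i,j)=0$. This is a finite sum of non-negative terms, so it vanishes if and only if every term $w_{ij}\,d(i,j)$ is zero. For edges with $w_{ij}=0$ the term is automatically zero and imposes nothing. For edges with $w_{ij}>0$ the term is zero exactly when $d(i,j)=0$. This gives both directions of the stated equivalence.

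There is no real obstacle here. The only points needing care are the sign of $\Wass$, which follows from its being a metric, and the bookkeeping for zero-weight edges and the $W(G)=0$ convention. I will not attempt to say that $d(i,j)=0$ forces identical policies, since the statement is phrased purely in terms of $d$.
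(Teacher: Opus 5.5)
Your proof is correct and follows the paper's argument essentially verbatim. The paper also argues that each product $w_{ij}\,d(i,j)$ is non-negative and scales the sum by $1/W(G)>0$, then gets the equality case from a finite sum of non-negative terms vanishing iff each term does; your extra treatment of the $W(G)=0$ convention and of $d(i,j)\ge 0$ via $\Wass$ being a metric is harmless, since the paper simply takes both as standing assumptions.
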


\begin{remark}[Semantic shift from $\SND$]
\label{rem:semantics}
Proposition~\ref{prop:nonneg} is strictly weaker than the zero condition
of full $\SND$: $\SNDG$ can vanish even when the system contains
behaviorally distinct agents, provided those agents are not adjacent in
$G$. When $G$ encodes task-relevant interaction structure, this is the
intended semantics: Graph-SND measures \emph{locally relevant}
diversity. When full-system diversity is the target, the estimator
interpretation of Section~\ref{sec:method:estimator} is the appropriate
one.
\end{remark}

\begin{proposition}[Graph-automorphism invariance]
\label{prop:perm}
For any automorphism $\sigma$ of $(G, w)$,
$\SNDG(\sigma \cdot \mathbf{D}, G) = \SNDG(\mathbf{D}, G)$, where
$(\sigma \cdot \mathbf{D})_{ij} = d(\sigma(i), \sigma(j))$.
\end{proposition}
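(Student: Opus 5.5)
The plan is a reindexing argument directly on Definition~\ref{def:graph-snd}. First we fix what an automorphism of $(G,w)$ means. It is a permutation $\sigma$ of $\Agents$ such that $\{i,j\}\in E$ if and only if $\{\sigma(i),\sigma(j)\}\in E$. It also preserves weights: $w_{\sigma(i)\sigma(j)} = w_{ij}$ for every edge. Consequently the induced map $\hat\sigma:\{i,j\}\mapsto\{\sigma(i),\sigma(j)\}$ is a well-defined bijection from $E$ onto itself. It is well defined because $d$ and $w$ are symmetric functions of unordered pairs, and it is a bijection because $\sigma$ is injective and $E$ is finite.

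The first step is to check that the normalizer is unchanged. Because $\hat\sigma$ permutes $E$ and preserves weights,
\[
W(G)=\sum_{\{i,j\}\in E} w_{ij}=\sum_{\{i,j\}\in E} w_{\sigma(i)\sigma(j)}.
\]
In particular the convention for $W(G)=0$ plays no role, since the normalizer is the same graph quantity $W(G)>0$ on both sides.

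The second step handles the numerator. Write
\[
\sum_{\{i,j\}\in E} w_{ij}\,(\sigma\cdot\mathbf{D})_{ij}=\sum_{\{i,j\}\in E} w_{ij}\, d(\sigma(i),\sigma(j)).
\]
Replace $w_{ij}$ by $w_{\sigma(i)\sigma(j)}$ using weight preservation. Then substitute $\{k,l\}=\hat\sigma(\{i,j\})$; since $\hat\sigma$ is a bijection of $E$, the sum becomes $\sum_{\{k,l\}\in E} w_{kl}\, d(k,l)$. Dividing by $W(G)$ gives $\SNDG(\sigma\cdot\mathbf{D},G)=\SNDG(\mathbf{D},G)$.

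There is no real obstacle. The only point needing care is that $\sigma\cdot\mathbf{D}$ acts only on the pairwise distance values, as the statement specifies. We do not need to re-derive it from permuted policies in \eqref{eq:dij}; we only note that $d$ is symmetric, so $(\sigma\cdot\mathbf{D})$ is again indexed by unordered pairs. As a sanity check, the case $G=K_n$ with unit weights, where every permutation is an automorphism, yields the full permutation invariance of $\SND$ via Proposition~\ref{prop:recovery}.
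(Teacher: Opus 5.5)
Your proof is correct and matches the paper's: both use weight preservation $w_{ij}=w_{\sigma(i)\sigma(j)}$ and reindex the numerator through the bijection $\{i,j\}\mapsto\{\sigma(i),\sigma(j)\}$ of $E$ onto itself. Your separate check of the normalizer is harmless but not needed, since $\sigma$ acts only on $\mathbf{D}$, so $W(G)$ is literally the same quantity on both sides.
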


\begin{proposition}[Complexity]
\label{prop:complexity}
Evaluating $\SNDG$ on a precomputed graph $G$ requires $|E|$ pairwise
distance computations $d(\cdot,\cdot)$, versus $\binom{n}{2}$ for $\SND$.
For $k$-NN graphs with $|E| = O(nk)$, this yields a speedup factor of
$\Theta(n/k)$ in the number of pairwise distance evaluations. This
statement excludes the cost of constructing $G$; measured construction
overheads for the graph builders used here appear in
Appendix~\ref{app:verify} (Table~\ref{tab:graph-build}).
\end{proposition}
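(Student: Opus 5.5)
The argument is direct counting from Definition~\ref{def:graph-snd}. Since $G$ is precomputed, evaluating~\eqref{eq:graph-snd} means iterating once over the edge list $E$. For each edge $\{i,j\}$ we compute one value $d(i,j)$ via~\eqref{eq:dij}, multiply it by $w_{ij}$, and add it to a running sum. We also accumulate $W(G)$ in the same pass, or have it precomputed with $G$, and finish with one division. Each edge contributes exactly one call to $d(\cdot,\cdot)$, and pairs not in $E$ are never touched, so the total is $|E|$ pairwise distance computations. Repeated edges cannot occur because $E \subseteq \Pairs$ is a set of unordered pairs. If some $w_{ij} = 0$, the corresponding evaluations may be skipped, so $|E|$ is an upper bound that is attained when all weights are positive. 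The remaining work is $O(|E|)$ scalar arithmetic, which does not count toward the stated cost measure.

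For $\SND$ we apply the same count to $K_n$. By Proposition~\ref{prop:recovery}, or directly from~\eqref{eq:snd}, $\SND$ sums $d(i,j)$ over all $i<j$, which is $|\Pairs| = \binom{n}{2}$ evaluations. The ratio of evaluation counts is therefore $\binom{n}{2}/|E|$.

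For the $k$-NN case, note first that the union of the directed $k$-nearest-neighbor relations (each agent points to $k$ others) has at most $nk$ unordered pairs. It has at least $nk/2$, since each unordered pair can arise from at most two directed choices. Hence $|E| = \Theta(nk)$, which is the regime implied by the $O(nk)$ hypothesis when $k \le n-1$. The ratio is then
\[
\frac{\binom{n}{2}}{\Theta(nk)} = \frac{n(n-1)/2}{\Theta(nk)} = \Theta\!\left(\frac{n}{k}\right).
\]

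The one point needing care is this lower bound on $|E|$. The stated hypothesis $|E| = O(nk)$ alone only gives a speedup of $\Omega(n/k)$, and the matching upper bound $O(n/k)$ needs the $nk/2$ lower bound on the symmetrized $k$-NN edge count. I will state that lower bound explicitly. Finally, I will note that the cost of constructing $G$ and the cost of each individual $d(i,j)$ (which is $O(|\Rollouts|\, n)$ Wasserstein evaluations) are identical across the two methods. Both therefore cancel in the ratio, consistent with the statement's exclusion of construction overhead.
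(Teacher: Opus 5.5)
Your counting argument is correct and is essentially the paper's proof. Your explicit bound $|E|\ge nk/2$ for the union $k$-NN graph fills a step the paper skips: the paper passes directly from $|E|=O(nk)$ to $\Theta(n/k)$, which on its own gives only $\Omega(n/k)$. One side remark is off: construction cost does not cancel between the two methods, since full SND builds no graph; it is simply excluded by the precomputed-graph hypothesis, whereas the per-evaluation cost of $d(i,j)$ is what genuinely cancels.
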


\subsection{Sampling estimators of $\SND$}
\label{sec:method:estimator}

When the goal is to approximate full $\SND$ at reduced cost, we derive
sampling-based estimators from the same graph formalism. Let $G_p$ include
each pair $\{i,j\} \in \Pairs$ independently with probability
$p \in (0, 1]$. We distinguish two random-graph statistics. The
Horvitz-Thompson statistic uses the population normalization
$|\Pairs|^{-1}$ and is unbiased even when the sampled edge set is empty
(the empty sum contributes zero). The normalized Graph-SND statistic uses
the realized sample size $m=|E(G_p)|$ and is the sample mean over sampled
edges; it is defined conditional on $m\geq1$ and is the quantity used in
the concentration theorem below.

\begin{proposition}[Unbiasedness]
\label{prop:unbiased}
With weights $w_{ij}=1/p$, the Horvitz-Thompson estimator
\[
\widehat{\SND}_{\mathrm{HT}}(G_p) \;:=\; |\Pairs|^{-1}
\sum_{\{i,j\} \in E(G_p)} w_{ij}\, d(i,j)
\]
satisfies
$\E_{G_p}\bigl[\widehat{\SND}_{\mathrm{HT}}(G_p)\bigr]
= \SND(\mathbf{D})$.
\end{proposition}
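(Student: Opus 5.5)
The plan is to rewrite the random sum over $E(G_p)$ as a deterministic sum over all of $\Pairs$ weighted by inclusion indicators, and then apply linearity of expectation. For each pair $\{i,j\}\in\Pairs$, let $\ind_{ij} := \ind[\{i,j\}\in E(G_p)]$. Under the Bernoulli sampling model these are independent (independence is not actually needed), and each satisfies $\E[\ind_{ij}] = \Prob(\{i,j\}\in E(G_p)) = p$. Since the weights are the constant $w_{ij}=1/p$, the estimator becomes
\[
\widehat{\SND}_{\mathrm{HT}}(G_p) = |\Pairs|^{-1}\sum_{\{i,j\}\in\Pairs} \frac{\ind_{ij}}{p}\, d(i,j).
\]
This expression is valid for every realization, including the empty edge set. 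In that case every indicator vanishes and the sum is zero, in agreement with the convention stated before the proposition. Note that no division by the random quantity $W(G_p)$ occurs, which is the whole point of the population normalization $|\Pairs|^{-1}$.

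Next I take expectations over $G_p$. The distances $d(i,j)$ are treated as deterministic, as fixed in the Background section, so the only randomness lies in the indicators. The sum is finite, with $|\Pairs|=\binom{n}{2}$ terms, so linearity of expectation applies with no integrability concerns. This gives
\[
\E_{G_p}\bigl[\widehat{\SND}_{\mathrm{HT}}(G_p)\bigr] = |\Pairs|^{-1}\sum_{\{i,j\}\in\Pairs}\frac{p}{p}\,d(i,j) = \binom{n}{2}^{-1}\sum_{i<j} d(i,j),
\]
using $p>0$. By \eqref{eq:snd} the right-hand side equals $\SND(\mathbf{D})$.

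There is no real obstacle here. The only point requiring care is to keep the estimator distinct from the normalized $\SNDG(\mathbf{D},G_p)$. The ratio form $\SNDG(\mathbf{D},G_p)$ is not unbiased in general, because of the random denominator and the $m=0$ event. The Horvitz–Thompson form avoids both issues, which is why the expectation needs no conditioning on $m\ge 1$.
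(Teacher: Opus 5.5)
Your proof is correct and follows the paper's argument: the paper also introduces indicators $Z_{ij}=\ind[\{i,j\}\in E(G_p)]$, rewrites the estimator as $|\Pairs|^{-1}\sum_{\{i,j\}\in\Pairs} Z_{ij}\, d(i,j)/p$, and applies linearity of expectation with $\E[Z_{ij}]=p$. One small point about your closing aside, which is not part of the proof: with constant weights and Bernoulli sampling, the normalized statistic is exactly unbiased conditional on $m\geq 1$ (by Lemma~\ref{lem:conditional-uniform}), so its unconditional bias comes only from the $m=0$ event, not from the random denominator itself.
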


\begin{remark}[Normalized Graph-SND is a sample mean]
\label{rem:ht-sample-mean}
If the same $1/p$ weights are inserted into the normalized Graph-SND
definition, $W(G_p) = |E(G_p)|/p$, so the weights cancel and $\SNDG$
reduces to the uniform sample mean:
\[
\SNDG(\mathbf{D}, G_p)
\;=\; \frac{p}{|E(G_p)|} \sum_{\{i,j\} \in E(G_p)} \frac{1}{p}\, d(i,j)
\;=\; \frac{1}{|E(G_p)|} \sum_{\{i,j\} \in E(G_p)} d(i,j).
\]
This \emph{uniform-weight variant} differs from
$\widehat{\SND}_{\mathrm{HT}}$ in its random normalization. It is used in
Theorem~\ref{thm:concentration} conditional on $|E(G_p)|\geq1$. In the
DiCo Bernoulli implementation, an empty draw falls back to the full pair
set for that call so the control ratio remains well-defined.
\end{remark}

\begin{theorem}[Concentration]
\label{thm:concentration}
Assume $d(i,j) \in [0, D_{\max}]$ for all $\{i,j\} \in \Pairs$. Let
$m = |E(G_p)|$, and define the uniform-weight variant
\[
\SNDG^{\mathrm{u}}(\mathbf{D}, G_p) \;:=\; m^{-1}
\sum_{\{i,j\} \in E(G_p)} d(i,j).
\]
Then for any $\delta \in (0,1)$, conditional on $|E(G_p)| = m \geq 1$,
with probability at least $1 - \delta$ over the randomness of $G_p$,
Hoeffding's sampling-without-replacement finite-population extension gives
\begin{equation}
\label{eq:concentration}
\abs{\SNDG^{\mathrm{u}}(\mathbf{D}, G_p) - \SND(\mathbf{D})}
\;\leq\; D_{\max} \sqrt{\frac{\log(2/\delta)}{2m}}.
\end{equation}
\end{theorem}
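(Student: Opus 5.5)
The plan is to reduce the statement to Hoeffding's inequality for sampling without replacement from a finite population.

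\textbf{Step 1: conditional law of $E(G_p)$.} Under the Bernoulli model each $S \subseteq \Pairs$ occurs with probability
\[
\Prob\bigl(E(G_p)=S\bigr)=p^{|S|}(1-p)^{|\Pairs|-|S|}.
\]
This depends on $S$ only through $|S|$. Hence, conditional on $|E(G_p)|=m$, the edge set is uniformly distributed over all $m$-subsets of $\Pairs$. Equivalently, it is a simple random sample of size $m$ drawn without replacement from the population $\Pairs$ of size $N=\binom{n}{2}$. The case $p=1$ is included: then $m=N$ and the sample is the whole population.

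\textbf{Step 2: identify the population quantities.} Attach to each pair $\{i,j\}$ the fixed value $d(i,j)\in[0,D_{\max}]$, which is deterministic by the standing convention. The population mean is
\[
N^{-1}\sum_{\{i,j\}\in\Pairs} d(i,j)=\SND(\mathbf{D}),
\]
by \eqref{eq:snd}. Conditional on $m$, the statistic $\SNDG^{\mathrm{u}}(\mathbf{D},G_p)$ is the sample mean of this without-replacement sample. Listing the sampled pairs in uniformly random order gives exchangeable draws $X_1,\dots,X_m$, each with mean $\SND(\mathbf{D})$.

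\textbf{Step 3: apply Hoeffding's finite-population result.} \citet[Thm.~4]{hoeffding1963probability} states that for every convex continuous $f$, $\E f(\sum_k X_k)\le \E f(\sum_k Y_k)$, where the $Y_k$ are i.i.d.\ draws with replacement from the same population. Taking $f(x)=e^{\lambda x}$, the moment generating function of the without-replacement sum is dominated by the i.i.d.\ one. The usual Chernoff argument with Hoeffding's lemma for variables in $[0,D_{\max}]$ then gives, for each tail,
\[
\Prob\bigl(\SNDG^{\mathrm{u}}-\SND\ge t \,\big|\, m\bigr)\le \exp\!\bigl(-2mt^2/D_{\max}^2\bigr).
\]
Combining both tails with a union bound yields total failure probability $2\exp(-2mt^2/D_{\max}^2)$. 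Setting this equal to $\delta$ and solving gives
\[
t=D_{\max}\sqrt{\frac{\log(2/\delta)}{2m}},
\]
which is \eqref{eq:concentration}.

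\textbf{Main obstacle.} The only delicate step is Step 1: the conditioning must turn the independent Bernoulli edge model into exactly uniform without-replacement sampling. Once that is established, the rest is a direct citation of Hoeffding's result. A Serfling-type factor $(1-(m-1)/N)$ could sharpen the bound, but it is not needed for the stated inequality.
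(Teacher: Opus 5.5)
Your proposal is correct and follows the paper's proof essentially step for step: conditional uniformity of $E(G_p)$ given $|E(G_p)|=m$, then identifying $\SND(\mathbf{D})$ as the population mean and $\SNDG^{\mathrm{u}}$ as the without-replacement sample mean, then applying Hoeffding's Section~6, Theorem~4 extension and inverting at level $\delta$. Your unpacking of that theorem (moment-generating-function domination plus Hoeffding's lemma) and your remark covering $p=1$, which the paper's lemma leaves implicit by assuming $p\in(0,1)$, are harmless refinements.
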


\begin{remark}[Unconditional Bernoulli form]
\label{rem:chernoff-m}
Let $\mu=p|\Pairs|$ be the expected number of sampled edges in $G_p$.
Combining Theorem~\ref{thm:concentration} with the Chernoff bound
$\Prob(|E(G_p)|<\mu/2)\leq e^{-\mu/8}$ gives, for any
$\delta\in(0,1)$, with probability at least
$1-\delta-e^{-\mu/8}$,
\[
\abs{\SNDG^{\mathrm{u}}(\mathbf{D},G_p)-\SND(\mathbf{D})}
\;\leq\; D_{\max}\sqrt{\frac{\log(2/\delta)}{\mu}},
\]
on the event $|E(G_p)|\geq1$. Thus Bernoulli sampling has the same
finite-population $O(1/\sqrt{p|\Pairs|})$ rate up to constants whenever
the expected edge count is not tiny.
\end{remark}

\begin{remark}[Sharper finite-population bounds]
\label{rem:serfling}
For sampling fractions $m/|\Pairs|$ bounded away from zero, Serfling's
inequality \citep{serfling1974probability} gives a bound of the form
$D_{\max}\sqrt{(1 - (m-1)/|\Pairs|)\log(2/\delta)/(2m)}$, which is strictly
sharper. See also \citet{bardenet2015concentration} for further
refinements and an empirical-Bernstein variant.
\end{remark}

Theorem~\ref{thm:concentration} gives a conditional rate of
$O(1/\sqrt{m})$ in the number of sampled edges. For target error
$\varepsilon$, it suffices to sample
$m = O\bigl(D_{\max}^2\, \varepsilon^{-2}\, \log(1/\delta)\bigr)$ edges,
which for large $n$ is vastly smaller than $\binom{n}{2}$.

For fixed sparse graphs, let $\pi(G)$ denote the edge forwarding index:
the minimum over all pairwise path routings of the maximum number of
routed paths using any edge~\citep{chung1987forwarding,heydemann1989forwarding}.

\begin{theorem}[Deterministic fixed-$G$ distortion]
\label{thm:distortion}
Let $G=(\Agents,E)$ be connected with unit edge weights, and suppose
$d:\Pairs\to\R_{\ge0}$ satisfies the triangle inequality. With
$\SNDG^{\mathrm{u}}(\mathbf{D},G):=|E|^{-1}\sum_{\{i,j\}\in E}d(i,j)$,
\begin{equation}
\label{eq:distortion}
\frac{|E|}{|\Pairs|}\SNDG^{\mathrm{u}}(\mathbf{D},G)
\leq \SND(\mathbf{D})
\leq
\frac{|E|\pi(G)}{|\Pairs|}\SNDG^{\mathrm{u}}(\mathbf{D},G).
\end{equation}
Thus $|E|/|\Pairs|\leq\SND/\SNDG^{\mathrm{u}}\leq|E|\pi(G)/|\Pairs|$
whenever $\SNDG^{\mathrm{u}}>0$, and both ratios are $1$ on $K_n$.
\end{theorem}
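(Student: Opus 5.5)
We prove the two inequalities in \eqref{eq:distortion} separately, after rewriting them in unnormalized form. Multiplying through by $|\Pairs|$ and using $|E|\,\SNDG^{\mathrm{u}}(\mathbf{D},G)=\sum_{\{i,j\}\in E}d(i,j)$ and $|\Pairs|\,\SND(\mathbf{D})=\sum_{\{i,j\}\in\Pairs}d(i,j)$, the statement is equivalent to
\[
\sum_{\{i,j\}\in E} d(i,j)\;\le\;\sum_{\{i,j\}\in\Pairs} d(i,j)\;\le\;\pi(G)\sum_{\{i,j\}\in E} d(i,j).
\]

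\textbf{Lower bound.} Since $E\subseteq\Pairs$ and $d\ge 0$, the edge sum is a sub-sum of the full pair sum. This uses neither connectivity nor the triangle inequality.

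\textbf{Upper bound.} This is a routing argument.
\begin{enumerate}
\item Because $G$ is connected, path routings exist, so $\pi(G)$ is finite. Fix a routing $R$ attaining $\pi(G)$: for each pair $\{u,v\}\in\Pairs$ it picks a path $P_{uv}$ in $G$ from $u$ to $v$, and every edge of $E$ lies on at most $\pi(G)$ of these paths.
\item Apply the triangle inequality repeatedly along $P_{uv}=(u=x_0,x_1,\dots,x_\ell=v)$:
\[
d(u,v)\le\sum_{r=1}^{\ell}d(x_{r-1},x_r)=\sum_{e\in P_{uv}}d(e).
\]
Each consecutive pair $\{x_{r-1},x_r\}$ is an edge of $G$. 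The case $\ell=1$ is trivial.
\item Sum over all pairs and swap the order of summation:
\[
\sum_{\{u,v\}\in\Pairs}d(u,v)\le\sum_{e\in E}d(e)\,\bigl|\{\{u,v\}:e\in P_{uv}\}\bigr|\le\pi(G)\sum_{e\in E}d(e).
\]
The last step uses the load bound together with $d\ge0$.
\end{enumerate}
Dividing by $|\Pairs|$ gives the right-hand inequality in \eqref{eq:distortion}.

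\textbf{Ratio form.} When $\SNDG^{\mathrm{u}}>0$, divide \eqref{eq:distortion} by it. For $G=K_n$ we have $|E|=|\Pairs|$. Routing each pair along its own edge gives load $1$, so $\pi(K_n)\le 1$. Since some edge must carry at least one path, $\pi(K_n)=1$, and both bounds equal $1$.

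\textbf{Main obstacle.} The steps are routine. The only care needed is in the swap-of-summation step: the load count must be exactly the forwarding-index quantity. I take paths to be simple, or else count an edge with multiplicity along a path, and the triangle-inequality bound holds either way. The triangle inequality is assumed directly on $d$, so we need not verify it for the Monte Carlo $W_2$ average.
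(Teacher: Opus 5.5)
Your proposal is correct and follows essentially the same route as the paper: the lower bound drops the non-edge terms, and the upper bound fixes a path system, applies the triangle inequality along each path, exchanges the order of summation to get edge congestions, and bounds these by the forwarding index, with $\pi(K_n)=1$ giving the recovery statement. The only cosmetic difference is that the paper's path systems must route each edge pair along its own edge; your argument does not need this, since $d(u,v)\le\sum_{e\in P_{uv}}d(e)$ holds for any path.
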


\begin{corollary}[Expander sparsification]
\label{cor:expander-distortion}
For a $d$-regular spectral expander,
$\pi(G)=\mathcal{O}(n\log n/d)$~\citep{leighton1999multicommodity}, so
Theorem~\ref{thm:distortion} gives $\mathcal{O}(\log n)$ worst-case
relative distortion using only $|E|=\Theta(n\log n)$ pairwise distances
when $d=\Theta(\log n)$.
\end{corollary}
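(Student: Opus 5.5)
The corollary follows by substituting the forwarding-index bound into the upper inequality of Theorem~\ref{thm:distortion}, keeping only the bookkeeping for $|E|$ and $|\Pairs|$. For a $d$-regular graph $|E| = nd/2$, and $|\Pairs| = n(n-1)/2$, so
\[
\frac{|E|\,\pi(G)}{|\Pairs|} \;=\; \frac{d\,\pi(G)}{n-1}.
\]
I will invoke the cited multicommodity-flow result of \citet{leighton1999multicommodity}. For a bounded-degree-normalized spectral expander (spectral gap, or equivalently edge expansion, bounded below by a constant fraction of $d$), the uniform all-pairs multicommodity flow can be routed, and then rounded to integral paths, with edge congestion $\mathcal{O}(n\log n/d)$. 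This gives $\pi(G)=\mathcal{O}(n\log n/d)$. Substituting yields $|E|\pi(G)/|\Pairs| = \mathcal{O}\bigl(d\cdot n\log n/(d(n-1))\bigr)=\mathcal{O}(\log n)$, independent of $d$. The upper inequality of Theorem~\ref{thm:distortion} then gives $\SND \le \mathcal{O}(\log n)\,\SNDG^{\mathrm{u}}$.

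For the lower side, Theorem~\ref{thm:distortion} gives $\SND \ge (|E|/|\Pairs|)\,\SNDG^{\mathrm{u}} = \frac{d}{n-1}\SNDG^{\mathrm{u}}$. This is the trivial direction and is not a constant-factor statement, so I will read ``relative distortion'' as the upper factor $\SND/\SNDG^{\mathrm{u}} = \mathcal{O}(\log n)$, which is the quantity Theorem~\ref{thm:distortion} controls through $\pi(G)$. The lower factor only rules out $\SND$ being far smaller than the edge average in the scaled sense. I will state this reading explicitly.

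For the edge count, choosing $d=\Theta(\log n)$ gives $|E| = nd/2 = \Theta(n\log n)$. Such expanders exist: random $d$-regular graphs with $d\ge3$ are expanders with high probability, and explicit constructions also work. For $d=\Theta(\log n)$ the graph is also connected, as Theorem~\ref{thm:distortion} requires. The triangle-inequality hypothesis on $d$ is inherited unchanged.

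The main obstacle is justifying $\pi(G)=\mathcal{O}(n\log n/d)$ with the correct dependence on $d$. Leighton--Rao is usually stated for constant-degree expanders, giving $\mathcal{O}(n\log n)$ congestion. To get the $1/d$ scaling I would route each path's flow through the $d$ parallel expansion available at every vertex. The route is Cheeger's inequality, then edge expansion $h(G)=\Omega(d)$, then the flow--cut gap $\mathcal{O}(\log n)$, which gives fractional congestion $\mathcal{O}(n\log n/ h(G)) = \mathcal{O}(n\log n/d)$. Randomized rounding then converts this to integral shortest-path-like routings, at the cost of a constant factor once the congestion is $\Omega(\log n)$. If that scaling proves delicate, I would fall back to stating the corollary with the cited bound taken as given, as the statement itself does.
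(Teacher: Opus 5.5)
Your proposal is correct and follows the paper's argument: you substitute the cited bound $\pi(G)=\mathcal{O}(n\log n/d)$ into the upper inequality of Theorem~\ref{thm:distortion}, use $|E|/|\Pairs| = d/(n-1)$ to get an $\mathcal{O}(\log n)$ factor independent of $d$, and take $d=\Theta(\log n)$ to get $|E|=\Theta(n\log n)$. Your reading of ``relative distortion'' as the upper factor matches the paper's intent, and your Cheeger/flow--cut-gap/randomized-rounding sketch for the forwarding-index bound is a sound elaboration of what the paper simply cites from Leighton--Rao.
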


\begin{remark}[Spectral sharpening]
\label{rem:main-spectral-sharpening}
Proposition~\ref{prop:spectral-nuclear-discrepancy} (Appendix~\ref{app:proofs})
sharpens this fixed-graph guarantee under controlled normalized nuclear
norm $\rho_*(\mathbf{D})=\|\mathbf{D}\|_*/(n\,\SND(\mathbf{D}))$: for
$d$-regular expanders, the relative error scales as
$\mathcal{O}(\rho_*(\mathbf{D})/\sqrt d)$, hence
$\mathcal{O}(\rho_*/\sqrt{\log n})$ at $d=\Theta(\log n)$. This does not
prove the empirical near-unit ratios, but it explains why low-effective-rank
MARL distance matrices can be much easier than the worst case.
\end{remark}

\begin{remark}[Unconditional probabilistic bound]
\label{rem:main-probabilistic-bound}
Proposition~\ref{prop:probabilistic-distortion} (Appendix~\ref{app:proofs})
gives a fully unconditional concentration bound: for $G$ drawn uniformly
from simple $d$-regular graphs, with probability at least $1-\delta$,
$|\SNDG^{\mathrm{u}}-\SND|\leq\widetilde{\mathcal{O}}(D_{\max}/\sqrt{n})$
at $d=\Theta(\log n)$, requiring only $D_{ij}\in[0,D_{\max}]$. This
provides a distribution-free probabilistic complement to the worst-case
$\mathcal{O}(\log n)$ ratio of
Corollary~\ref{cor:expander-distortion}, though sharper
distribution-dependent analysis would be needed to fully explain the
empirically observed near-unit ratios in
Section~\ref{sec:experiments:expander}.
\end{remark}

Both Theorems~\ref{thm:concentration} and~\ref{thm:distortion} depend
only on boundedness (and for Theorem~\ref{thm:distortion}, triangle
inequality) of $d(i,j)$, not on the Gaussian/Wasserstein instantiation;
discrete action spaces may substitute bounded metrics such as total
variation or the square-root Jensen-Shannon metric.

% ============================================================
\section{Experiments}
\label{sec:experiments}

We evaluate Graph-SND as an aggregation layer, so the experiments target
metric recovery, estimator scaling, deterministic sparse approximation,
and downstream DiCo control rather than new task-level RL performance.
Hyperparameters and rollout details are in Appendix~\ref{app:hyperparams}.
Core checks on VMAS Multi-Agent Goal Navigation verify exact recovery on $K_4$,
Horvitz-Thompson unbiasedness at $n=8$, and zero concentration-bound
violations across $12$ cells with $2{,}000$ finite-population draws each
at $n=16$
(Appendix~\ref{app:verify}).

\subsection{Scaling to $n=100$ under training dynamics}
\label{sec:experiments:scaling}

We train $100$ independently parameterized Gaussian policies on VMAS
Multi-Agent Goal Navigation for $500$ PPO iterations. Every $25$ iterations we log
full $\SND$ over all $\binom{100}{2}=4{,}950$ pairs and
$\SNDG^{\mathrm{u}}(G_{0.1})$ on one Bernoulli-$0.1$ edge sample, with
CUDA-synchronized timing. Figure~\ref{fig:scaling-n100} shows the two
diversity curves overlapping throughout training while the sampled call
stays near the predicted $1/p=10\times$ speedup
($[9.0,10.9]\times$ over all logged measurements). A separate frozen-init
GPU timing sweep through $n=500$ follows the same
$\binom{n}{2}/|E|$ law: at $n=500$, Bernoulli-$0.1$ reduces mean metric
time from $15{,}502.8$ ms to $1{,}553.6$ ms
(Figure~\ref{fig:timing-n500}). Together, the online $n=100$ run and
the frozen $n=500$ sweep separate the two questions a reviewer should
care about: the sampled metric tracks full SND during learning, and the
wall-clock reduction persists at the larger team sizes where the
quadratic complete graph becomes expensive.

\begin{figure}[t]
\centering
\includegraphics[width=\linewidth]{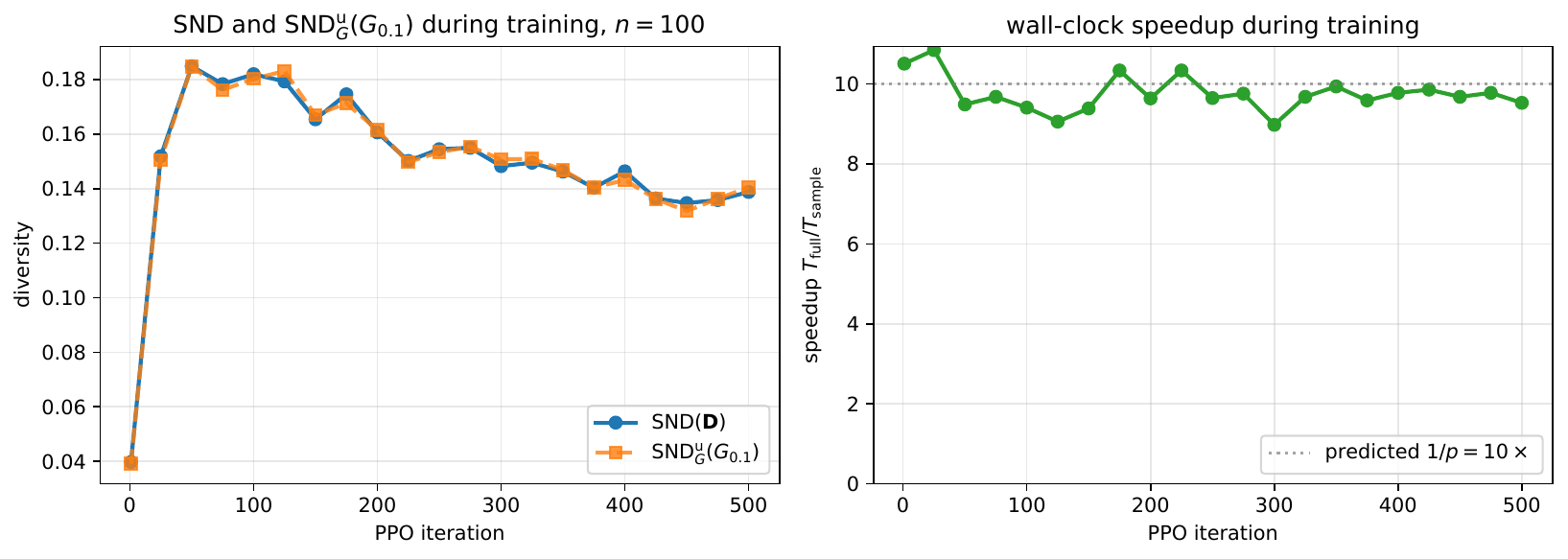}
\caption{$n=100$ PPO training run. Bernoulli-$0.1$ Graph-SND tracks full
SND at every logged iteration while reducing metric time by about
$10\times$, matching Proposition~\ref{prop:complexity} under
non-stationary training dynamics.}
\label{fig:scaling-n100}
\end{figure}

\begin{figure}[t]
\centering
\includegraphics[width=\linewidth]{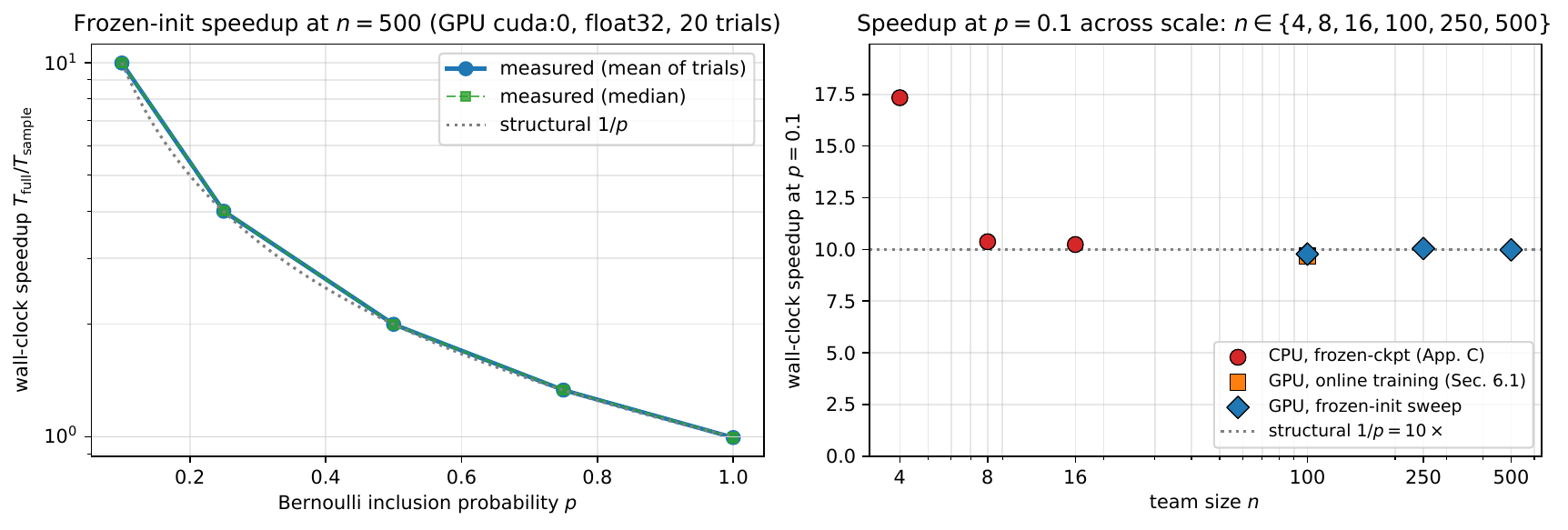}
\caption{Wall-clock scaling of $\SNDG^{\mathrm{u}}(G_p)$ versus full
SND across $n\in\{4,\ldots,500\}$. Bernoulli-$p$ sampling follows the
structural $1/p$ speedup, and Bernoulli-$0.1$ remains near
$10\times$ through $n=500$; sampled timings include edge sampling and
edge-transfer overhead.}
\label{fig:timing-n500}
\end{figure}

\subsection{Closed-loop diversity control on VMAS Dispersion}
\label{sec:experiments:dico-dispersion}

We integrate Graph-SND into the DiCo diversity controller
\citep{bettini2024dico} on VMAS Dispersion. An $n=10$ study shows that
$k$-NN, Bernoulli-$0.1$, and random $d$-regular Graph-SND all track
$\SND_{\mathrm{des}}=0.1$ and match the full-SND controller's reward
within seed variation while reducing per-call metric cost
(Figure~\ref{fig:dico-dispersion-knn}). The stronger closed-loop
evidence is the $n=50$ head-to-head in
Table~\ref{tab:dico-n50-main}: each of three seeds and three set points
is trained twice, once with Bernoulli-$0.1$ Graph-SND and once with full
SND. Bernoulli-$0.1$ tracks all targets to sub-$1\%$ relative error,
yields paired reward differences centered near zero, and reduces metric
cost by $9.24$ to $9.63\times$, matching the $10\times$ pair-count
prediction. This is the main closed-loop substitution test: full SND is
not merely evaluated post hoc, but trained as the competing DiCo signal
on the same seed and set-point grid. A separate post-hoc full-SND audit
of the trained actions (Table~\ref{tab:dico-n50-posthoc-main} and
Appendix~\ref{app:dico-n50-sweep},
Figure~\ref{fig:dico-n50-posthoc-full-snd}) confirms that the
Bernoulli-controlled policies' complete-graph SND also stays within
$0.61\%$ of the target; this audit signal is never fed back into the
controller. The only systematic difference is the aggregation used by
the controller, and the observed differences are smaller than
seed-to-seed variation.

\begin{figure}[t]
\centering
\includegraphics[width=\linewidth]{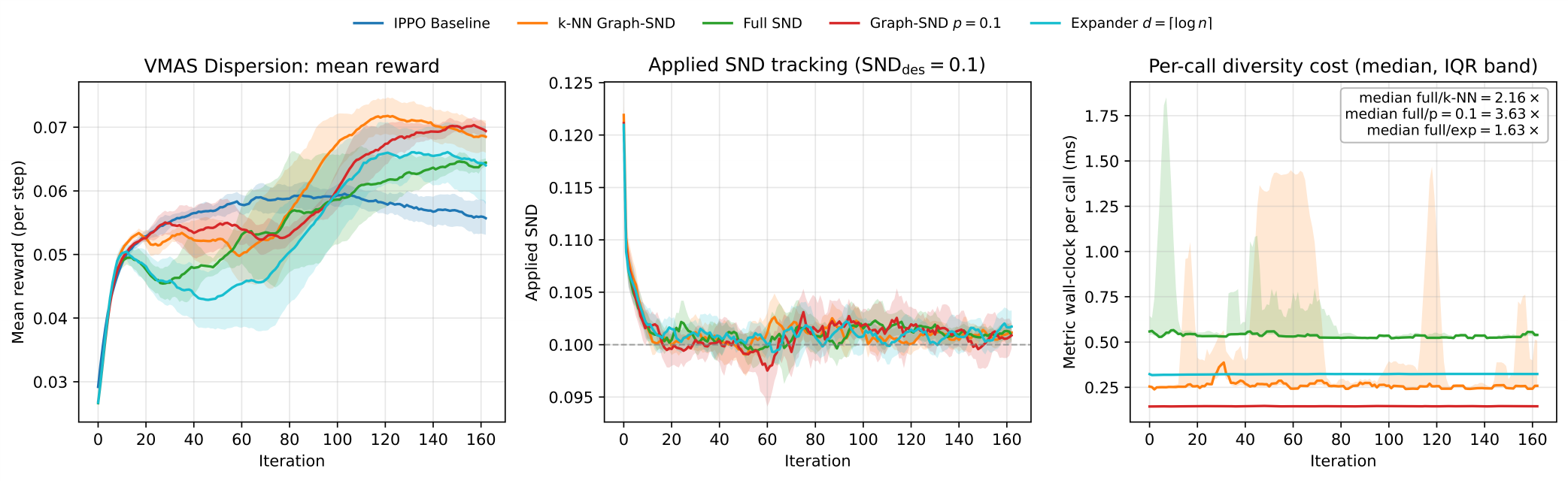}
\caption{DiCo with Graph-SND on VMAS Dispersion ($n{=}10$, ten foods,
$\SND_{\mathrm{des}}{=}0.1$, three seeds). Sparse $k$-NN,
Bernoulli-$0.1$, and expander aggregators track the full-SND controller
while reducing per-call metric cost.}
\label{fig:dico-dispersion-knn}
\end{figure}

\begin{table}[t]
\centering
\small
\begin{tabular}{lccc}
\toprule
$\SND_{\mathrm{des}}$ & Rel.\ tracking err.\ (Bern/full) & Reward (Bern/full) & Metric speedup \\
\midrule
$0.12$ & $0.53\%$ / $0.37\%$ & $0.295{\pm}0.016$ / $0.325{\pm}0.008$ & $9.63\times$ \\
$0.14$ & $0.48\%$ / $0.31\%$ & $0.338{\pm}0.008$ / $0.327{\pm}0.007$ & $9.39\times$ \\
$0.15$ & $0.42\%$ / $0.35\%$ & $0.350{\pm}0.005$ / $0.333{\pm}0.009$ & $9.24\times$ \\
\bottomrule
\end{tabular}
\caption{Compact main-text summary of the $n{=}50$ DiCo head-to-head
against full SND. Each row aggregates three matched seeds on the late
window (last $50$ iterations); Bernoulli-$0.1$ uses
$\E[|E|]=122.5$ sampled edges versus $1{,}225$ full pairs. Full curves
and per-cell statistics appear in Appendix~\ref{app:dico-n50-sweep};
across all nine matched cells, the paired mean reward difference
(Bernoulli minus full) is $-6.2{\times}10^{-4}\pm9.2{\times}10^{-3}$
SEM.
The largest single-cell mean gap is ${\sim}9\%$ at $\SND_{\mathrm{des}}{=}0.12$, offset by Bernoulli-$0.1$ leading at the two higher set points; the paired analysis averages over this variation.}
\label{tab:dico-n50-main}
\end{table}

\begin{table}[t]
\centering
\small
\begin{tabular}{lcc}
\toprule
$\SND_{\mathrm{des}}$ & Post-hoc full-SND err.\ (Bern/full) &
Post-hoc minus applied (Bern/full) \\
\midrule
$0.12$ & $0.50\%$ / $0.48\%$ &
$2.9{\times}10^{-4}$ / $3.0{\times}10^{-4}$ \\
$0.14$ & $0.61\%$ / $0.36\%$ &
$4.7{\times}10^{-4}$ / $2.6{\times}10^{-4}$ \\
$0.15$ & $0.43\%$ / $0.46\%$ &
$3.1{\times}10^{-4}$ / $4.7{\times}10^{-4}$ \\
\bottomrule
\end{tabular}
\caption{Post-hoc complete-graph SND audit for the $n{=}50$ DiCo
head-to-head. Each cell is remeasured with full SND on the scaled
actions at every iteration, but the audit value is not used by the
controller. Bernoulli-$0.1$ remains within $0.61\%$ of the desired
complete-graph diversity at all set points, and the audit differs from
the online applied signal by less than $5{\times}10^{-4}$.}
\label{tab:dico-n50-posthoc-main}
\end{table}

\subsection{Expander sparsification ablation}
\label{sec:experiments:expander}

We test the deterministic fixed-$G$ theory by comparing random
$d$-regular graphs to Bernoulli, uniform-size, and $k$-NN graphs at
matched edge counts
$d\in\{\lceil\log_2 n\rceil,\lceil2\log_2 n\rceil,\lceil4\log_2 n\rceil\}$,
$n\in\{50,100,200,500\}$, five graph seeds each. Figure~\ref{fig:expander-distortion}
shows the load-bearing result: $d$-regular expanders keep
$\SNDG^{\mathrm{u}}/\SND$ in $[0.9987,1.0013]$ at every scale, including
$n=500$ with $d=9$ and only $2{,}250$ of $124{,}750$ pairs. Their edge
forwarding index follows the predicted $\mathcal{O}(n\log n/d)$ scaling
and is $2$ to $39\times$ lower than matched-edge baselines at $n=500$,
explaining why the forwarding-index bound favors expanders even though
the worst-case $\mathcal{O}(\log n)$ ratio is conservative. The trained
$n=100$ checkpoint panel preserves the same ordering
(Appendix~\ref{app:expander-extra}). A non-Gaussian PettingZoo MPE
simple-spread transfer panel with categorical policies and TVD distance
also shows small bias and matched cost reduction
(Appendix~\ref{app:verify}, Table~\ref{tab:mpe-tvd-panel}).

The expander claim is deliberately split into worst-case and empirical
parts. The forwarding-index theorem gives a deterministic guarantee for
arbitrary metric distance tables, not a promise of near-unit distortion.
The near-unit ratios in Figure~\ref{fig:expander-distortion} are a
data-dependent observation, partially explained by
Proposition~\ref{prop:spectral-nuclear-discrepancy}: when the realized
distance matrix has controlled normalized nuclear norm, expansion turns
the logarithmic worst-case dependence into a decreasing spectral
discrepancy bound. This distinction matters because fixed sparse graphs
change the measurement target unless their structure is justified; the
expander result shows one fixed-graph choice that remains accurate on
the tested MARL distance matrices while retaining near-linear edge count.

\begin{figure}[t]
  \centering
  \includegraphics[width=\textwidth]{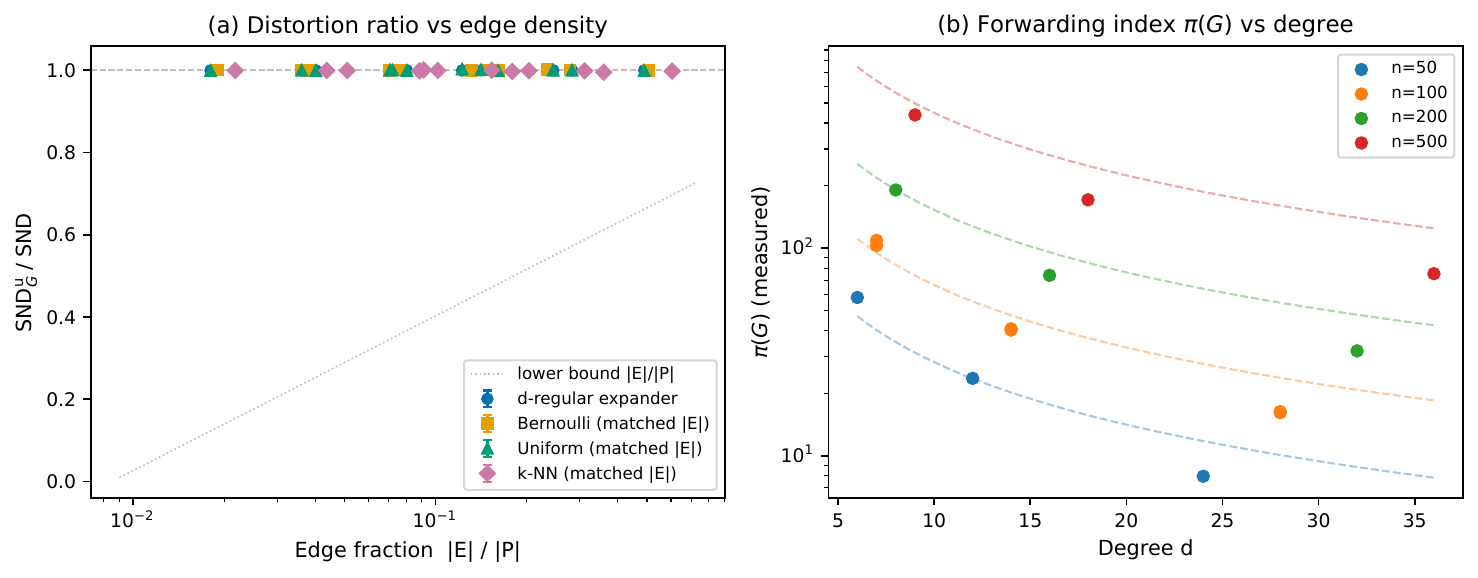}
  \caption{Expander sparsification ablation. Random $d$-regular graphs
  give the tightest near-unit $\SNDG^{\mathrm{u}}/\SND$ ratios at
  matched edge budgets, and their measured forwarding index tracks the
  $\mathcal{O}(n\log n/d)$ mechanism in Theorem~\ref{thm:distortion}.}
  \label{fig:expander-distortion}
\end{figure}

% ============================================================
\section{Discussion}
\label{sec:discussion}

Graph-SND is best understood as a choice of graph semantics. Use
Bernoulli-$p$ when full $\SND$ is the target and sparsity is purely
computational; use $k$-NN or communication graphs when local
interactions define the intended diversity signal; use random
$d$-regular expanders when a fixed near-linear aggregator with strong
worst-case structure is desired. The same graph layer applies beyond
Gaussian Wasserstein SND whenever the pairwise dissimilarity satisfies
the relevant boundedness or metric assumptions (Appendix~\ref{app:verify},
Table~\ref{tab:mpe-tvd-panel}).

\begin{center}
\refstepcounter{table}
\label{tab:graph-selection-rule}
\centering
\small
\begin{tabular}{p{0.20\linewidth}p{0.25\linewidth}p{0.42\linewidth}}
\toprule
Goal & Recommended $G$ & Main guarantee and use case \\
\midrule
Estimate full $\SND$ & Bernoulli-$p$ or fixed-size random sample &
Horvitz-Thompson unbiasedness and $O(1/\sqrt m)$ concentration; best
when full-system diversity remains the target. \\
Measure local diversity & Communication, spatial, or $k$-NN graph &
Interpretable fixed-$G$ signal over interacting pairs; appropriate when
non-edges are intentionally excluded from the semantic target. \\
Use a fixed sparse proxy & Random $d$-regular expander,
$d=\Theta(\log n)$ & Forwarding-index worst-case structure plus
near-unit empirical ratios at $\Theta(n\log n)$ edges in our ablation. \\
\bottomrule
\end{tabular}
\par\smallskip
\begin{minipage}{0.96\linewidth}
\small \textbf{Table~\thetable:} Practical graph-selection rule. The graph is
not a nuisance parameter: it decides whether Graph-SND is estimating full
SND or defining a different, localized diversity functional.
\end{minipage}
\end{center}

\paragraph{Why aggregation is the right intervention.}
Graph-SND is most useful when SND is not a one-off diagnostic but a
quantity repeatedly evaluated inside training, selection, or diversity
control. In that setting full SND imposes a quadratic tax at every metric
call, while sparse aggregation exposes an explicit accuracy-cost knob:
$p$ or $m$ for random estimators, $k$ for local neighborhoods, and $d$
for expander proxies. Because the intervention is only the aggregation
layer, the downstream optimizer and pairwise distance remain unchanged.
The evidence is therefore deliberately staged: exact recovery and
unbiasedness establish equivalence on the complete target, scaling
experiments show the expected cost law, and DiCo verifies that the same
replacement remains stable when the metric is used as feedback rather
than only measured offline.

\paragraph{Failure modes and interpretation.}
The main failure mode is transparent from the graph semantics. A sparse
fixed graph can hide diversity on non-edges: for example, if agents form
two behaviorally distinct clusters but $G$ contains mostly within-cluster
edges, $\SNDG$ may be small while complete-graph $\SND$ is large. This is
not a failure of the estimator theory; it means the graph has selected a
local measurement target. Conversely, Bernoulli and fixed-size random
graphs treat omitted edges as missing samples from the complete pair set,
so their guarantees speak directly about full SND. This separation is why
we evaluate fixed expanders as deterministic proxies, evaluate random
graphs by unbiasedness and concentration, and audit the DiCo controller
post hoc with full complete-graph SND. The same implementation can
therefore support two scientifically different uses: measuring diversity
only where interaction structure says it matters, or approximating the
global SND metric when all-pairs aggregation is the bottleneck.

\paragraph{Limitations and future work.}
The empirical scope remains intentionally focused. Online training is
shown at $n=100$ for PPO and at $n=50$ for DiCo, while $n=500$ is a
frozen-policy timing and expander-measurement scale. All RL runs use
Independent PPO, with centralized-critic algorithms left for future
work. Non-Gaussian validation is limited to one PettingZoo MPE
TVD/categorical panel and synthetic checks, even though the main
estimator guarantees are distance-agnostic under boundedness. Finally,
Proposition~\ref{prop:spectral-nuclear-discrepancy} explains part of the
gap between worst-case expander distortion and observed near-unit ratios
via nuclear-norm control, and
Proposition~\ref{prop:probabilistic-distortion} gives an unconditional
$\widetilde{\mathcal{O}}(1/\sqrt{n})$ probabilistic bound for the random
$d$-regular sampling distribution; the residual gap to the empirically
observed $\mathcal{O}(1/n)$-scale ratios is consistent with standard
sub-Gaussian-vs-Bernstein looseness and is not closed by these tools.
Natural next steps are learned or task-adaptive graphs, larger
discrete-action benchmarks, and broader DiCo studies over tasks,
set points, and sampling schedules.
\section{Conclusion}
\label{sec:conclusion}

Graph-SND turns SND's all-pairs average into a graph-edge aggregate. This
single substitution unifies local diversity measurement, unbiased sparse
estimation, and expander-based deterministic approximation, with theory
and experiments showing the expected cost reduction while preserving the
behavior of SND in training and DiCo control. The practical knob is the
graph $G$: it decides whether the diversity signal is local by design or
a cheaper approximation of the global complete-graph metric.
The implication is that all-pairs aggregation should be a deliberate
choice, not the default paid at every scale: when global SND is required,
random sampling gives estimator guarantees; when task structure matters,
fixed graphs give a sharper local signal; and when deterministic sparse
proxies are desired, expanders provide a strong near-linear candidate.
\codelocation

\clearpage
\bibliographystyle{plainnat}
\bibliography{references}

\begin{ack}
The author thanks Matteo Bettini and the ProrokLab for the System
Neural Diversity implementation and the VMAS \& DiCo open-source
releases on which this work builds.
\end{ack}

% ============================================================
% Appendix: proofs, hyperparameters, and empirical verification.
\appendix
\section{Proofs}
\label{app:proofs}

We use the conventions of Section~\ref{sec:background}: $\Agents$ is the
agent set with $|\Agents| = n$; $\Pairs = \binom{\Agents}{2}$ is the set
of unordered agent pairs, with $|\Pairs| = \binom{n}{2}$; $d(i,j) \geq 0$
is the Monte Carlo behavioral distance~\eqref{eq:dij}, treated as a
deterministic scalar given the rollout set $\Rollouts$; and $G = (\Agents,
E, w)$ has non-negative edge weights $w_{ij}$ with total weight
$W(G) = \sum_{\{i,j\} \in E} w_{ij} > 0$.

\subsection*{Proof of Proposition~\ref{prop:recovery} (Recovery)}

If $G = K_n$ with $w_{ij} = 1$ for all $\{i,j\} \in \Pairs$, then
$E = \Pairs$ and $W(G) = |\Pairs| = \binom{n}{2}$. Substituting
into~\eqref{eq:graph-snd},
\[
\SNDG(\mathbf{D}, K_n)
\;=\; \frac{1}{\binom{n}{2}} \sum_{\{i,j\} \in \Pairs} 1 \cdot d(i,j)
\;=\; \binom{n}{2}^{-1} \sum_{i < j} d(i,j)
\;=\; \SND(\mathbf{D}). \qed
\]

\subsection*{Proof of Proposition~\ref{prop:nonneg} (Non-negativity)}

Each term $w_{ij} d(i,j)$ in~\eqref{eq:graph-snd} is a product of
non-negative factors, so the numerator $\sum_{E} w_{ij} d(i,j) \geq 0$.
Since $W(G) > 0$ by assumption, $\SNDG(\mathbf{D}, G) \geq 0$.

For the equality condition, $\SNDG = 0$ iff
$\sum_{\{i,j\} \in E} w_{ij} d(i,j) = 0$. A sum of non-negative terms is
zero iff each term is zero, so the equality holds iff $w_{ij} d(i,j) = 0$
for every $\{i,j\} \in E$. Equivalently, $d(i,j) = 0$ for every edge
$\{i,j\} \in E$ with $w_{ij} > 0$. \qed

\subsection*{Proof of Proposition~\ref{prop:perm} (Automorphism invariance)}

Let $\sigma : \Agents \to \Agents$ be an automorphism of $(G, w)$, meaning
$\sigma$ is a bijection satisfying $\{i,j\} \in E \iff \{\sigma(i),
\sigma(j)\} \in E$ and $w_{ij} = w_{\sigma(i)\sigma(j)}$ for all
$\{i,j\} \in E$. Then
\begin{align*}
\SNDG(\sigma \cdot \mathbf{D}, G)
&\;=\; \frac{1}{W(G)} \sum_{\{i,j\} \in E} w_{ij}\, d(\sigma(i), \sigma(j)).
\end{align*}
Reindex the sum by $\{i', j'\} = \{\sigma(i), \sigma(j)\}$. Because
$\sigma$ is an automorphism, the map $\{i,j\} \mapsto \{\sigma(i),
\sigma(j)\}$ is a bijection $E \to E$, and $w_{ij} =
w_{\sigma(i)\sigma(j)} = w_{i'j'}$. Hence
\[
\SNDG(\sigma \cdot \mathbf{D}, G)
\;=\; \frac{1}{W(G)} \sum_{\{i',j'\} \in E} w_{i'j'}\, d(i', j')
\;=\; \SNDG(\mathbf{D}, G). \qed
\]

\subsection*{Proof of Proposition~\ref{prop:complexity} (Complexity)}

Evaluating~\eqref{eq:graph-snd} on a precomputed graph $G$ reduces to
computing $d(i,j)$ for each $\{i,j\} \in E$ and forming a weighted sum. The
sum costs $O(|E|)$ floating-point operations, dominated by the $|E|$
pairwise-distance computations. $\SND$ in~\eqref{eq:snd} instead requires
$|\Pairs| = \binom{n}{2}$ such computations. Since each pairwise-distance
computation has the same cost in both cases, the speedup factor in the
number of pairwise-distance evaluations is $\binom{n}{2}/|E|$. For $k$-NN
graphs $|E| = O(nk)$, giving a factor of $\Theta(n/k)$. This statement
does not include the cost of constructing $G$. \qed

\subsection*{Proof of Proposition~\ref{prop:unbiased} (Unbiasedness)}

For each $\{i,j\} \in \Pairs$, let $Z_{ij} = \ind[\{i,j\} \in E(G_p)]$.
By construction, $\{Z_{ij}\}_{\{i,j\} \in \Pairs}$ are independent
Bernoulli$(p)$ random variables. With $w_{ij} = 1/p$ on edges of $G_p$,
\[
\widehat{\SND}_{\mathrm{HT}}(G_p)
\;=\; |\Pairs|^{-1} \sum_{\{i,j\} \in E(G_p)} w_{ij}\, d(i,j)
\;=\; |\Pairs|^{-1} \sum_{\{i,j\} \in \Pairs} Z_{ij} \cdot \tfrac{1}{p}
\cdot d(i,j).
\]
Taking expectation and using $\E[Z_{ij}] = p$,
\[
\E_{G_p}\!\left[\widehat{\SND}_{\mathrm{HT}}(G_p)\right]
\;=\; |\Pairs|^{-1} \sum_{\{i,j\} \in \Pairs}
p \cdot \tfrac{1}{p} \cdot d(i,j)
\;=\; |\Pairs|^{-1} \sum_{\{i,j\} \in \Pairs} d(i,j)
\;=\; \SND(\mathbf{D}). \qed
\]

\subsection*{Proof of Theorem~\ref{thm:concentration} (Concentration)}

We first establish that, conditional on $|E(G_p)| = m$, $E(G_p)$ is a
uniform random subset of $\Pairs$ of size $m$.

\begin{lemma}
\label{lem:conditional-uniform}
Let $\{Z_\alpha\}_{\alpha \in \Pairs}$ be i.i.d.\ Bernoulli$(p)$ with
$p \in (0,1)$, and let $E = \{\alpha : Z_\alpha = 1\}$. Conditional on
$|E| = m$ with $0 \leq m \leq |\Pairs|$, the set $E$ is uniformly
distributed over subsets of $\Pairs$ of size $m$.
\end{lemma}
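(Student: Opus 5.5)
The plan is to compute the conditional probability of each fixed size-$m$ subset directly and observe that it does not depend on which subset is chosen. Let $N := |\Pairs|$ and fix any $S \subseteq \Pairs$ with $|S| = m$.

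The event $\{E = S\}$ is the event that $Z_\alpha = 1$ for $\alpha \in S$ and $Z_\alpha = 0$ for $\alpha \notin S$. Since the $Z_\alpha$ are independent, this gives
\[
\Prob(E = S) \;=\; p^{m}(1-p)^{N-m}.
\]
This value depends on $S$ only through its size $m$.

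Next we compute the probability of the conditioning event. The event $\{|E| = m\}$ is the disjoint union of $\{E = S'\}$ over all $\binom{N}{m}$ subsets $S'$ of size $m$. Hence
\[
\Prob(|E| = m) \;=\; \binom{N}{m} p^{m}(1-p)^{N-m},
\]
which is the Binomial$(N,p)$ mass function.

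The only point needing care is that the conditioning event must have positive probability. Because $p \in (0,1)$, both $p^m$ and $(1-p)^{N-m}$ are strictly positive for every $0 \le m \le N$, so $\Prob(|E| = m) > 0$ and the conditional probability is well defined. This is exactly why the endpoint $p = 1$ is excluded. In that case $E = \Pairs$ deterministically, and the conclusion holds trivially for $m = N$.

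Since $\{E = S\} \subseteq \{|E| = m\}$, Bayes' rule gives
\[
\Prob\bigl(E = S \mid |E| = m\bigr) \;=\; \frac{p^{m}(1-p)^{N-m}}{\binom{N}{m} p^{m}(1-p)^{N-m}} \;=\; \binom{N}{m}^{-1}.
\]
This is the same for every size-$m$ subset $S$, so $E$ is uniform over such subsets, as claimed.

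This is the reduction that Theorem~\ref{thm:concentration} needs. Conditional on $|E(G_p)| = m$, the edges form a simple random sample of size $m$ drawn without replacement from the finite population $\{d(\alpha)\}_{\alpha \in \Pairs}$. That population takes values in $[0, D_{\max}]$, and its mean is $\SND(\mathbf{D})$. Hoeffding's without-replacement inequality then applies directly.
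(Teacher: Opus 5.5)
Your proof is correct and matches the paper's argument. Both compute $\Prob(E=T)=p^m(1-p)^{|\Pairs|-m}$, divide by the binomial mass $\binom{|\Pairs|}{m}p^m(1-p)^{|\Pairs|-m}$, and note that the resulting ratio $\binom{|\Pairs|}{m}^{-1}$ does not depend on $T$; your explicit check that $\Prob(|E|=m)>0$ for $p\in(0,1)$ is a small rigor point the paper leaves implicit.
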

\begin{proof}
For any fixed $T \subseteq \Pairs$ with $|T| = m$,
\[
\Prob(E = T \mid |E| = m)
\;=\; \frac{\Prob(E = T)}{\Prob(|E| = m)}
\;=\; \frac{p^m (1-p)^{|\Pairs|-m}}
          {\binom{|\Pairs|}{m} p^m (1-p)^{|\Pairs|-m}}
\;=\; \binom{|\Pairs|}{m}^{-1},
\]
which does not depend on $T$.
\end{proof}

Now fix $m \geq 1$ and condition on $|E(G_p)| = m$. By
Lemma~\ref{lem:conditional-uniform}, $E(G_p)$ is a uniform random sample
of size $m$ without replacement from the finite population
\[
\Pi \;:=\; \bigl\{ d(i,j) : \{i,j\} \in \Pairs \bigr\},
\]
where $|\Pi| = |\Pairs|$ and each $d(i,j) \in [0, D_{\max}]$. The
population mean is
\[
\mu \;:=\; |\Pairs|^{-1} \sum_{\{i,j\} \in \Pairs} d(i,j) \;=\; \SND(\mathbf{D}),
\]
and the sample mean is precisely
\[
\SNDG^{\mathrm{u}}(\mathbf{D}, G_p) \;=\; m^{-1} \sum_{\{i,j\} \in E(G_p)} d(i,j).
\]

\citet[Section 6, Theorem 4]{hoeffding1963probability} establishes that
the two-sided Hoeffding inequality applies to sampling without
replacement from a bounded finite population: for any $t > 0$,
\begin{equation}
\label{eq:hoeff-wor}
\Prob\bigl(\abs{\SNDG^{\mathrm{u}} - \SND} \geq t \;\bigm|\; |E(G_p)| = m\bigr)
\;\leq\; 2\exp\!\left(- \frac{2 m t^2}{D_{\max}^2}\right).
\end{equation}
Setting the right-hand side equal to $\delta$ and solving for $t$ gives
\[
t \;=\; D_{\max}\sqrt{\frac{\log(2/\delta)}{2m}}.
\]
Thus with probability at least $1 - \delta$ conditional on $|E(G_p)| = m$,
\[
\abs{\SNDG^{\mathrm{u}}(\mathbf{D}, G_p) - \SND(\mathbf{D})}
\;\leq\; D_{\max}\sqrt{\frac{\log(2/\delta)}{2m}}. \qed
\]

\subsection*{Deterministic fixed-$G$ distortion}
\label{app:distortion}

For a connected unit-weight graph $G=(\Agents,E)$, a \emph{path system}
is an assignment of a simple $i$-$j$ path $\mathcal{R}_{ij}\subseteq E$
to every pair $\{i,j\}\in\Pairs$, with
$\mathcal{R}_{ij}=\{\{i,j\}\}$ whenever $\{i,j\}\in E$. The
\emph{congestion} of a path system $\mathcal{R}$ at an edge
$\{a,b\}\in E$ is
$c_{\mathcal{R}}(a,b):=\abs{\{\{i,j\}\in\Pairs :
\{a,b\}\in\mathcal{R}_{ij}\}}$, and the \emph{edge forwarding index} of
$G$ is
$\pi(G):=\min_{\mathcal{R}} \max_{\{a,b\}\in E} c_{\mathcal{R}}(a,b)$
\citep{chung1987forwarding,heydemann1989forwarding}. The index depends
on $G$ alone, satisfies $\pi(K_n)=1$, admits closed forms for many
graph families (hypercubes, tori, Cayley graphs), and is
$\mathcal{O}(n\log n / d)$ for $d$-regular graphs with bounded spectral
gap~\citep{leighton1999multicommodity}.

\begin{proposition}[Spectral discrepancy under nuclear-norm control]
\label{prop:spectral-nuclear-discrepancy}
Let $G$ be a connected simple $d$-regular graph on $n$ vertices with adjacency
matrix $A$ and
$\lambda_2:=\max\{|\lambda_i(A)|: i\geq 2\}$, where
$d=\lambda_1(A)$ is the trivial eigenvalue. Let
$\mathbf{D}\in\R^{n\times n}$ be symmetric with zero diagonal. Then
\begin{equation}
\label{eq:spectral-nuclear-absolute}
\abs{\SNDG^{\mathrm{u}}(\mathbf{D},G)-\SND(\mathbf{D})}
\leq
\frac{\lambda_2+d/(n-1)}{n\,d}\,\|\mathbf{D}\|_* ,
\end{equation}
where $\|\cdot\|_*$ denotes the nuclear norm. If $\SND(\mathbf{D})>0$
and
\[
\rho_*(\mathbf{D})
:=
\frac{\|\mathbf{D}\|_*}{n\,\SND(\mathbf{D})},
\]
then
\begin{equation}
\label{eq:spectral-nuclear-relative}
\abs{\frac{\SNDG^{\mathrm{u}}(\mathbf{D},G)}{\SND(\mathbf{D})}-1}
\leq
\frac{\lambda_2+d/(n-1)}{d}\,\rho_*(\mathbf{D}).
\end{equation}
In particular, if $G$ is Ramanujan
($\lambda_2\leq 2\sqrt{d-1}$) and $\rho_*(\mathbf{D})\leq C$, then the
relative error is at most
$C(2\sqrt{d-1}+d/(n-1))/d=\mathcal{O}(C/\sqrt d)$; at
$d=\Theta(\log n)$ this is
$\mathcal{O}(C/\sqrt{\log n})$.
\end{proposition}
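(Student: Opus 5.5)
The plan is to write both aggregates as Frobenius inner products of $\mathbf{D}$ with explicit symmetric matrices. The difference then becomes a single inner product $\langle M,\mathbf{D}\rangle$, which we bound by trace duality, $|\langle M,\mathbf{D}\rangle|\leq\|M\|_{\mathrm{op}}\|\mathbf{D}\|_*$.

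\textbf{Step 1 (matrix forms).} Since $\mathbf{D}$ is symmetric, $\sum_{\{i,j\}\in E}d(i,j)=\tfrac12\langle A,\mathbf{D}\rangle$. For a $d$-regular graph, $|E|=nd/2$, so
\[
\SNDG^{\mathrm{u}}(\mathbf{D},G)=\frac{\langle A,\mathbf{D}\rangle}{nd}.
\]
Similarly, with $J$ the all-ones matrix, $\sum_{i<j}d(i,j)=\tfrac12\langle J-I,\mathbf{D}\rangle$ and $|\Pairs|=n(n-1)/2$, so
\[
\SND(\mathbf{D})=\frac{\langle J-I,\mathbf{D}\rangle}{n(n-1)}.
\]
Subtracting gives
\[
\SNDG^{\mathrm{u}}-\SND=\frac{1}{nd}\langle M,\mathbf{D}\rangle,
\qquad
M:=A-\frac{d}{n-1}(J-I).
\]

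\textbf{Step 2 (spectrum of $M$).} Because $G$ is $d$-regular, $\mathbf{1}$ is an eigenvector of $A$ with eigenvalue $d$. The matrices $A$, $J$ and $I$ are symmetric and share the invariant decomposition $\mathrm{span}(\mathbf{1})\oplus\mathbf{1}^\perp$, so we diagonalize them simultaneously.
\begin{itemize}
\item On $\mathbf{1}$: $M\mathbf{1}=d\mathbf{1}-\frac{d}{n-1}(n-1)\mathbf{1}=0$.
\item On $\mathbf{1}^\perp$: $J$ vanishes, so $M$ acts as $A+\frac{d}{n-1}I$. The eigenvalues of $A$ there are bounded in absolute value by $\lambda_2$.
\end{itemize}
Hence $\|M\|_{\mathrm{op}}\leq\lambda_2+d/(n-1)$. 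Combining with trace duality gives \eqref{eq:spectral-nuclear-absolute}.

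The zero-diagonal hypothesis implies $\langle I,\mathbf{D}\rangle=0$, so one could drop the $I$ term in $M$. We keep it because it makes $M$ vanish exactly on $\mathbf{1}$. This bookkeeping of the $J$ versus $J-I$ normalization, which produces the $d/(n-1)$ correction, is the only delicate step.

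\textbf{Step 3 (relative and Ramanujan forms).} To get the relative bound, divide \eqref{eq:spectral-nuclear-absolute} by $\SND(\mathbf{D})>0$ and substitute $\|\mathbf{D}\|_*=n\,\SND(\mathbf{D})\,\rho_*(\mathbf{D})$. The factor $n$ cancels and yields \eqref{eq:spectral-nuclear-relative}.

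For the Ramanujan case, insert $\lambda_2\leq2\sqrt{d-1}$ and $\rho_*\leq C$ to get the relative error $C\bigl(2\sqrt{d-1}+d/(n-1)\bigr)/d$. Since $d\leq n-1$, the second term is at most $C$ times something bounded, and the first is $\mathcal{O}(C/\sqrt d)$. At $d=\Theta(\log n)$ the term $d/(n-1)$ is $o(1/\sqrt d)$, so the bound becomes $\mathcal{O}(C/\sqrt{\log n})$.
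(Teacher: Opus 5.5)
Your proposal is correct and follows the paper's proof essentially step for step. It uses the same discrepancy matrix $M=A-\frac{d}{n-1}(J-I)$ with $\langle M,\mathbf{D}\rangle_F=nd\,(\SNDG^{\mathrm{u}}-\SND)$, the same $\mathrm{span}(\mathbf{1})\oplus\mathbf{1}^\perp$ eigen-analysis giving $\|M\|_{\mathrm{op}}\le\lambda_2+d/(n-1)$, and the same operator/nuclear-norm duality. One small tightening: in the Ramanujan step the $d/(n-1)$ term contributes exactly $C/(n-1)\le C/d\le C/\sqrt d$ to the relative error, which justifies the general $\mathcal{O}(C/\sqrt d)$ claim more cleanly than ``$C$ times something bounded.''
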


\begin{remark}[What the spectral bound explains]
\label{rem:spectral-sharpening}
Section~\ref{sec:experiments:expander} measures
$\SNDG^{\mathrm{u}}/\SND\in[0.9987,1.0013]$ on random $d$-regular
graphs at $d{=}\Theta(\log n)$, far tighter than the worst-case
$\mathcal{O}(\log n)$ ratio of
Corollary~\ref{cor:expander-distortion}.
Proposition~\ref{prop:spectral-nuclear-discrepancy} formalizes a
distribution-dependent explanation: if the realized pairwise
dissimilarity matrix has controlled normalized nuclear norm
$\rho_*(\mathbf{D})$, then spectral expansion improves the deterministic
bound from logarithmic distortion to a decreasing
$\mathcal{O}(\rho_*(\mathbf{D})/\sqrt{\log n})$ relative-error bound at
$d=\Theta(\log n)$. This is still a fixed-$G$, worst-case alignment
bound over matrices satisfying the nuclear-norm condition; it does not
fully explain the much smaller empirical ratios, which likely also use
random-graph variance reduction and the specific distributional
structure of MARL distance matrices.
Proposition~\ref{prop:probabilistic-distortion} below gives an
unconditional probabilistic sharpening for random $d$-regular graphs;
closing the remaining gap to the empirically observed
$\mathcal{O}(1/n)$-scale ratios would require sharper
distribution-dependent or Bernstein-type analysis.
\end{remark}

\paragraph{Proof of Proposition~\ref{prop:spectral-nuclear-discrepancy}.}
Let $J=\mathbf{1}\mathbf{1}^{\top}$ and define the discrepancy matrix
\[
M_G:=A-\frac{d}{n-1}(J-I).
\]
For the all-ones vector, $A\mathbf{1}=d\mathbf{1}$ and
$(J-I)\mathbf{1}=(n-1)\mathbf{1}$, so $M_G\mathbf{1}=0$.
For any eigenvector $v\perp\mathbf{1}$ of $A$ with
$Av=\lambda v$, we have $Jv=0$ and $(J-I)v=-v$, hence
\[
M_G v=\left(\lambda+\frac{d}{n-1}\right)v.
\]
Therefore
\[
\|M_G\|_{\mathrm{op}}
\leq
\lambda_2+\frac{d}{n-1}.
\]
Using $D_{ii}=0$, $|E|=nd/2$, and
$|\Pairs|=n(n-1)/2$,
\[
\langle M_G,\mathbf{D}\rangle_F
=
2|E|\,\SNDG^{\mathrm{u}}(\mathbf{D},G)
-\frac{d}{n-1}\,2|\Pairs|\,\SND(\mathbf{D})
=
nd\bigl(\SNDG^{\mathrm{u}}(\mathbf{D},G)-\SND(\mathbf{D})\bigr).
\]
Operator-norm/nuclear-norm duality gives
$|\langle M_G,\mathbf{D}\rangle_F|
\leq \|M_G\|_{\mathrm{op}}\|\mathbf{D}\|_*$, proving
\eqref{eq:spectral-nuclear-absolute}. Dividing by $\SND(\mathbf{D})$
proves \eqref{eq:spectral-nuclear-relative}; the Ramanujan specialization
substitutes $\lambda_2\leq2\sqrt{d-1}$.

\paragraph{Proof of Theorem~\ref{thm:distortion}.}

\emph{Lower bound.} Since $d(i,j) \geq 0$, dropping the
$|\Pairs|-|E|$ non-edge terms from the full sum cannot increase it:
\[
|\Pairs|\cdot\SND(\mathbf{D})
\;=\; \sum_{\{i,j\}\in\Pairs} d(i,j)
\;\geq\; \sum_{\{i,j\}\in E} d(i,j)
\;=\; |E|\cdot\SNDG^{\mathrm{u}}(\mathbf{D},G),
\]
with equality iff $d(i,j)=0$ for every non-edge
$\{i,j\}\in\Pairs\setminus E$.

\emph{Upper bound.} Fix a path system $\mathcal{R}$ on $G$ in the sense
of Section~\ref{sec:theory}. For each edge pair $\{i,j\}\in E$, the
single-edge path $\mathcal{R}_{ij}=\{\{i,j\}\}$ gives the identity
$d(i,j)=\sum_{\{a,b\}\in\mathcal{R}_{ij}} d(a,b)$. For each non-edge
pair $\{i,j\}\in\Pairs\setminus E$, writing the path as
$\mathcal{R}_{ij}=(i=v_0,v_1,\dots,v_L=j)$ and applying the triangle
inequality $L-1$ times,
\[
d(i,j)\;\leq\; d(v_0,v_1)+d(v_1,v_L)\;\leq\;\cdots\;\leq\;
\sum_{k=0}^{L-1} d(v_k,v_{k+1})
\;=\; \sum_{\{a,b\}\in\mathcal{R}_{ij}} d(a,b).
\]
Combining both cases and exchanging the order of summation,
\[
\sum_{\{i,j\}\in\Pairs} d(i,j)
\;\leq\; \sum_{\{i,j\}\in\Pairs}\sum_{\{a,b\}\in\mathcal{R}_{ij}} d(a,b)
\;=\; \sum_{\{a,b\}\in E} c_{\mathcal{R}}(a,b)\,d(a,b),
\]
where
$c_{\mathcal{R}}(a,b):=|\{\{i,j\}\in\Pairs : \{a,b\}\in\mathcal{R}_{ij}\}|$
is the congestion of $\mathcal{R}$ at edge $\{a,b\}$. Since $d(a,b)\geq 0$,
\[
\sum_{\{a,b\}\in E} c_{\mathcal{R}}(a,b)\,d(a,b)
\;\leq\; \Bigl(\max_{\{a,b\}\in E} c_{\mathcal{R}}(a,b)\Bigr)
\sum_{\{a,b\}\in E} d(a,b).
\]
This inequality holds for every path system $\mathcal{R}$, so it holds
in particular for one that attains the minimum in the definition of
$\pi(G)$, for which the bracket equals $\pi(G)$. Therefore
\[
|\Pairs|\cdot\SND(\mathbf{D})
\;\leq\; \pi(G)\cdot\sum_{\{a,b\}\in E} d(a,b)
\;=\; \pi(G)\cdot |E|\cdot\SNDG^{\mathrm{u}}(\mathbf{D}, G).
\]
Dividing by $|\Pairs|$ yields
$\SND(\mathbf{D})\leq (|E|\pi(G)/|\Pairs|)\,\SNDG^{\mathrm{u}}(\mathbf{D},G)$.
The recovery statement follows because $K_n$ satisfies $\pi(K_n)=1$
(take $\mathcal{R}_{ij}=\{\{i,j\}\}$ for all pairs) and
$|E|=|\Pairs|=\binom{n}{2}$. \qed

\paragraph{Fractional sharpening.}
Replacing the integral path system by a convex combination of paths
$f_{ij}=\sum_{\mathcal{R}\in\Pi_{ij}}\alpha_{\mathcal{R}}\mathbf{1}_{\mathcal{R}}$
($\alpha_{\mathcal{R}}\geq 0$, $\sum_{\mathcal{R}}\alpha_{\mathcal{R}}=1$)
and repeating the argument with the fractional congestion
$C^{*}_{\mathbf{f}}(a,b):=\sum_{\{i,j\}\in\Pairs} f_{ij}(a,b)$ tightens
the upper bound of Eq.~\eqref{eq:distortion} to
$|E|\pi^{*}(G)/|\Pairs|$, where
$\pi^{*}(G)=\min_{\mathbf{f}}\max_{\{a,b\}\in E} C^{*}_{\mathbf{f}}(a,b)\leq\pi(G)$
is the fractional edge forwarding index, computable by a
multicommodity-flow LP~\citep{leighton1999multicommodity}.

\subsection*{Probabilistic distortion for random $d$-regular graphs}

\begin{proposition}[Unconditional probabilistic distortion]
\label{prop:probabilistic-distortion}
Let $G$ be drawn uniformly from simple $d$-regular graphs on $n$
vertices, with $d\geq 3$, $nd$ even, and $d^2 = o(n)$ (so the headline
setting $d=\Theta(\log n)$ is included). Let $\mathbf{D}\in\R^{n\times n}$
be symmetric with $D_{ii}=0$ and $D_{ij}\in[0,D_{\max}]$. Then for any
$\delta\in(0,1)$, with probability at least $1-\delta$ over $G$,
\begin{equation}
\label{eq:probabilistic-distortion}
\bigl|\,\SNDG^{\mathrm{u}}(\mathbf{D},G)-\SND(\mathbf{D})\,\bigr|
\;\leq\;
D_{\max}\sqrt{\frac{8}{nd}\left(\log\frac{4}{\delta}
+\frac{d^2-1}{4}\right)}.
\end{equation}
In particular, for $d=\Theta(\log n)$, this gives
$|\SNDG^{\mathrm{u}}-\SND|
=\widetilde{\mathcal{O}}(D_{\max}/\sqrt{n})$ with high probability,
independent of any structural assumption on $\mathbf{D}$ beyond
boundedness.
\end{proposition}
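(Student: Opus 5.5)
The form of the bound, with an additive $(d^2-1)/4$ inside the logarithmic term and constant $8/(nd)$, points to a transfer argument. First prove a Hoeffding-type bound in the configuration (pairing) model. Then condition on simplicity, paying a factor $1/\Prob(\text{simple})\approx e^{(d^2-1)/4}$. Concretely, let $\mathcal{G}^*$ be the configuration model: $nd$ half-edges ($d$ per vertex) matched by a uniform perfect matching. Conditional on the resulting multigraph being simple, it is uniform over simple $d$-regular graphs. The classical estimate (Bender--Canfield, McKay--Wormald) gives
\[
\Prob(\text{simple})=(1+o(1))\,e^{-(d^2-1)/4}\quad\text{when } d^2=o(n).
\]
So for $n$ large, $\Prob(\text{simple})\geq \tfrac12 e^{-(d^2-1)/4}$. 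This is the source of the $d^2=o(n)$ hypothesis and of the factor $4/\delta$ rather than $2/\delta$.

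\emph{Concentration in the configuration model.} Define the statistic
\[
S:=\sum_{\text{matched pairs }(h,h')} D_{v(h)v(h')}.
\]
It is a function of the uniform matching, and on simple outcomes $S/|E|=\SNDG^{\mathrm u}$ with $|E|=nd/2$. Each matched pair joins two independently uniform-looking half-edges. In expectation, each pair contributes $\frac{1}{nd-1}\sum_{h\neq h'}D_{v(h)v(h')}/\dots$. Because $D_{ii}=0$ and each ordered vertex pair $(i,j)$, $i\neq j$, contributes $d^2$ half-edge pairs,
\[
\E[S]=\tfrac{nd}{2}\cdot \frac{d^2\sum_{i\neq j}D_{ij}}{nd(nd-1)}
=\tfrac{nd}{2}\,\SND(\mathbf D)\,\frac{d(n-1)}{nd-1}.
\]
This equals $|E|\,\SND$ up to a factor $1-O(1/(nd))$. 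That bias of order $D_{\max}/(nd)$ is absorbed in the lower-order slack, or one may note that the exact constant is handled by a slightly conservative choice of $8$.

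For the fluctuation, I would use a martingale (Doob) exposure of the matching: reveal the partner of the lowest-indexed unmatched half-edge, one step at a time, for $nd/2$ steps. A standard switching argument shows that swapping one matched pair changes $S$ by at most $2D_{\max}$. This gives bounded differences $c=2D_{\max}$ over $nd/2$ steps, and Azuma--McDiarmid for uniform matchings yields
\[
\Prob\bigl(|S-\E S|\geq t\bigr)\leq 2\exp\!\Bigl(-\frac{2t^2}{(nd/2)(2D_{\max})^2}\Bigr)=2\exp\!\Bigl(-\frac{t^2}{nd\,D_{\max}^2}\Bigr).
\]
Dividing by $|E|=nd/2$, the deviation of $S/|E|$ exceeds $\varepsilon$ with probability at most $2\exp(-nd\varepsilon^2/(4D_{\max}^2))$. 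Multiplying by $1/\Prob(\text{simple})\le 2e^{(d^2-1)/4}$ gives a bound of $4\exp\bigl(-nd\varepsilon^2/(4D_{\max}^2)+(d^2-1)/4\bigr)$. Setting this equal to $\delta$ produces $\varepsilon=D_{\max}\sqrt{(4/(nd))(\log(4/\delta)+(d^2-1)/4)}$. The stated constant $8$ leaves a factor-2 margin to absorb the $O(1/(nd))$ bias and the $(1+o(1))$ in the simplicity probability. The specialization to $d=\Theta(\log n)$ then follows, since $(d^2-1)/(nd)=O(\log n/n)$, giving $\widetilde{\mathcal O}(D_{\max}/\sqrt n)$.

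\emph{Main obstacle.} The delicate step is the bounded-difference martingale for a uniform matching, where the steps are not independent. I would first try the standard switching/coupling lemma: conditioned on the first $k$ revealed pairs, two choices of the $(k{+}1)$-st partner can be coupled by a single transposition of the remaining matching, which changes $S$ by at most $2D_{\max}$. If that argument turns out to be loose by a constant, the factor-2 slack in the constant $8$ should cover it. A secondary issue is making the non-asymptotic statement honest, since the simplicity estimate is asymptotic. I would state it for $n\ge n_0(d)$, or cite an explicit lower bound on $\Prob(\text{simple})$ valid under $d^2=o(n)$.
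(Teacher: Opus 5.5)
Your proposal is correct and takes essentially the paper's route: the configuration model, a bounded-difference inequality with switching/transposition constant $2D_{\max}$, and transfer to the uniform simple graph via $\Prob(\text{simple})\geq\tfrac12 e^{-(d^2-1)/4}$. The paper applies McDiarmid's permutation inequality over the $nd$ stubs and lands exactly on the constant $8$, whereas your pair-exposure martingale with the conditional-range form of Azuma--Hoeffding gives $4$.

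That factor-$2$ margin is what actually makes the argument close. Because loops contribute $D_{ii}=0$, the configuration-model mean is $\tfrac{nd}{2}\SND(\mathbf{D})\bigl(1-\tfrac{d-1}{nd-1}\bigr)$, not the $\tfrac{nd}{2}\SND(\mathbf{D})$ the paper asserts by symmetry. The relative bias is therefore of order $1/n$, not $1/(nd)$ as you wrote. The resulting absolute bias is still below $D_{\max}/n$, which fits under the margin $(\sqrt{8}-2)\,D_{\max}\sqrt{(d^2-1)/(4nd)}$ for all admissible $n,d$.
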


\begin{proof}
The argument uses three classical ingredients.

\emph{Step 1: Configuration model.}
Represent the uniform random $d$-regular multigraph via the configuration
model~\citep{bollobas1980probabilistic}: assign $d$ labeled stubs to each
vertex and draw a uniformly random perfect matching $\pi$ of the $nd$
stubs. Define $f(\pi):=\sum_{e\in E(G(\pi))}D_e$, so
$\SNDG^{\mathrm{u}}(\mathbf{D},G(\pi))=2f(\pi)/(nd)$. By edge symmetry,
$\E_\pi[f(\pi)]=(nd/2)\,\SND(\mathbf{D})$.

\emph{Step 2: McDiarmid's permutation inequality.}
For matchings $\pi,\pi'$ differing by a single transposition of two stubs
in different pairs, the edge sets differ in exactly four edges (two
deleted, two added), so $|f(\pi)-f(\pi')|\leq 2D_{\max}$. By Hoeffding's
inequality for permutations~\citep[\S6.1]{mcdiarmid1989method} with
$N=nd$ and bounded difference $c=2D_{\max}$:
\[
\Pr_\pi\!\bigl[|f(\pi)-\E f(\pi)|\geq t\bigr]
\;\leq\; 2\exp\!\left(-\frac{t^2}{2\,nd\,D_{\max}^2}\right).
\]
Setting $t=(nd/2)\epsilon$ and dividing through:
\[
\Pr_\pi\!\bigl[|\SNDG^{\mathrm{u}}-\SND|\geq\epsilon\bigr]
\;\leq\; 2\exp\!\left(-\frac{nd\,\epsilon^2}{8\,D_{\max}^2}\right).
\]

\emph{Step 3: Conditioning on simplicity.}
The uniform simple $d$-regular graph is the configuration multigraph
conditioned on being simple. By~\citet{bollobas1980probabilistic}
and~\citet{mckay1981subgraphs}, in the sparse regime $d^2=o(n)$, which
contains the headline setting $d=\Theta(\log n)$ used here,
$\Pr_\pi(G(\pi)\text{ is simple})=\exp\!\bigl(-(d^2-1)/4 + o(1)\bigr)$
as $n\to\infty$, so
$\Pr_\pi(G(\pi)\text{ is simple})\geq\tfrac{1}{2}e^{-(d^2-1)/4}$ for $n$
sufficiently large. Therefore
$\Pr_{G\sim\mathrm{simple}}[A]\leq 2e^{(d^2-1)/4}\Pr_\pi[A]$.
Combining with Step~2 and inverting for $\delta$ yields
\eqref{eq:probabilistic-distortion}.
\end{proof}

% ============================================================
\section{Hyperparameters}
\label{app:hyperparams}

Table~\ref{tab:hyperparams} lists the Independent PPO hyperparameters
used for all experiments in Section~\ref{sec:experiments}. All values
are held fixed across team sizes $n \in \{4, 8, 16, 100\}$ except as
noted below.

\begin{table}[ht]
\centering
\begin{tabular}{ll}
\toprule
\textbf{Hyperparameter} & \textbf{Value} \\
\midrule
Optimizer & Adam \\
Learning rate & $3 \times 10^{-4}$ \\
Clip parameter $\epsilon$ & $0.2$ \\
Entropy coefficient & $0.01$ \\
Value loss coefficient & $0.5$ \\
Max gradient norm & $0.5$ \\
GAE $\lambda$ & $0.95$ \\
Discount $\gamma$ & $0.99$ \\
Parallel environments & 32 \\
Rollout horizon & 128 \\
PPO epochs per update & 4 \\
Mini-batch size & 512 \\
Training iterations & 100 \\
Hidden layer width & 64 \\
Hidden layers & 2 \\
Activation & Tanh \\
\bottomrule
\end{tabular}
\caption{Independent PPO hyperparameters used for all experiments.
The rollout batch per iteration is $\text{Rollout horizon} \times
\text{Parallel environments} = 128 \times 32 = 4{,}096$ transitions
per agent.}
\label{tab:hyperparams}
\end{table}
The $n = 100$ scaling run of Section~\ref{sec:experiments:scaling} additionally uses 500 iterations, \texttt{num\_envs} $= 32$ (unchanged), and scenario parameters $\texttt{world\_spawning\_x} = \texttt{world\_spawning\_y} = 4.0$, $\texttt{agent\_radius} = 0.03$ to accommodate the larger team size.

% ============================================================
\section{Empirical verification of core propositions}
\label{app:verify}

This appendix contains the protocols, figures, and diagnostics for
the three structural verifications summarized in
Section~\ref{sec:experiments}: exact recovery on $K_n$
(Proposition~\ref{prop:recovery}), unbiasedness of the
Horvitz-Thompson estimator (Proposition~\ref{prop:unbiased}), and
the Hoeffding/Serfling concentration bounds
(Theorem~\ref{thm:concentration}).

\paragraph{Recovery (Proposition~\ref{prop:recovery}).} We evaluate
$\SND(\mathbf{D})$ and $\SNDG(K_4)$ on the same pairwise-distance
matrix at iter~0 (near-homogeneous policies) and iter~100 (trained
policies). The absolute error is $0.0$ in every cell; on a log axis
the gap is indistinguishable from numerical noise at $10^{-12}$
resolution (Figure~\ref{fig:recovery}). This validates that our
Graph-SND implementation is correctly aligned with $\SND$ as a
reference for the remaining experiments.

\begin{figure}[t]
\centering
\includegraphics[width=\linewidth]{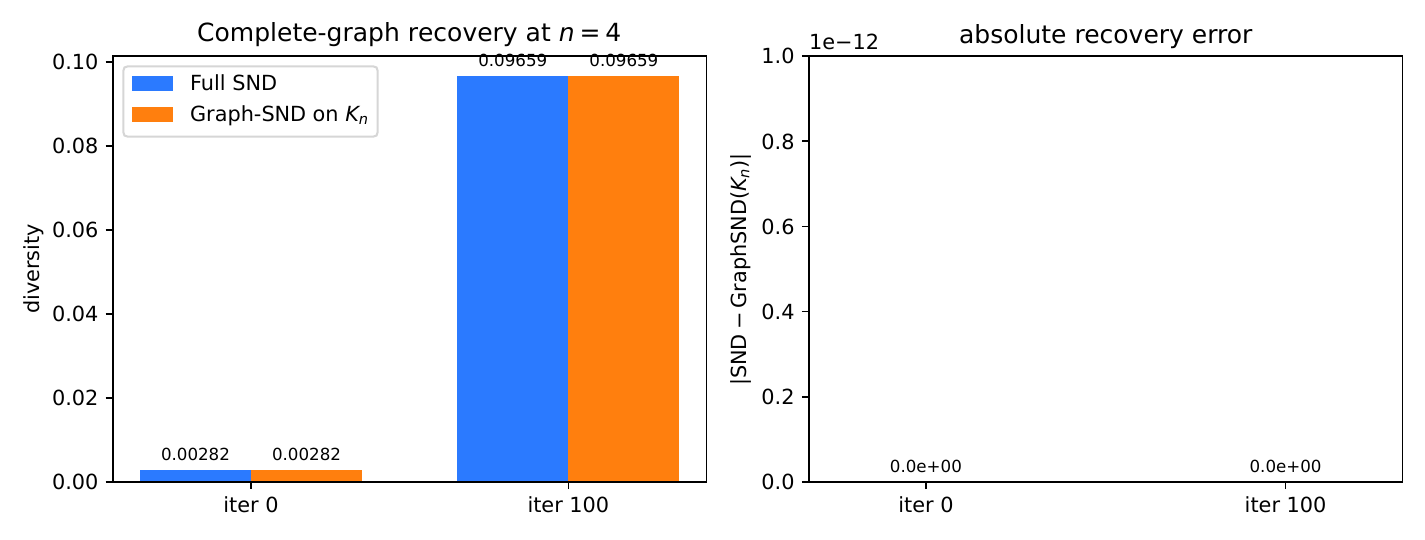}
\caption{Proposition~\ref{prop:recovery}: recovery of $\SND$ by
$\SNDG$ on the complete graph. Left: side-by-side values of $\SND$ and
$\SNDG$ on $K_4$ at iter~0 (near-homogeneous policies) and iter~100
(trained policies). Right: absolute recovery error, which is $0.0$ in
both cases (the y-axis is shown at $10^{-12}$ resolution for context).}
\label{fig:recovery}
\end{figure}

\paragraph{Unbiasedness (Proposition~\ref{prop:unbiased}).} We test
$\E_{G_p}[\widehat{\SND}_{\mathrm{HT}}(G_p)] = \SND(\mathbf{D})$ at $n{=}8$ by
drawing $2{,}000$ independent Bernoulli-$p$ graphs for each
$p \in \{0.1, 0.25, 0.5, 0.75\}$, computing $\widehat{\SND}_{\mathrm{HT}}$ on each,
and reporting the sample mean with a $95\%$ CI
(Figure~\ref{fig:unbiasedness}). The CI contains $\SND(\mathbf{D})$
in every cell across both checkpoints. As a sharper test, the
standardized deviation $|\mathrm{bias}|/\mathrm{SE}$ is approximately
$\mathcal{N}(0,1)$ per cell under an unbiased Gaussian-error
estimator; the maximum observed value across all eight
$(\text{checkpoint}, p)$ cells is $1.42$, well below
$|z_{0.975}| = 1.96$. Estimator variance scales as $1/p$, visible as
widening CIs at small $p$.

\begin{figure}[t]
\centering
\includegraphics[width=\linewidth]{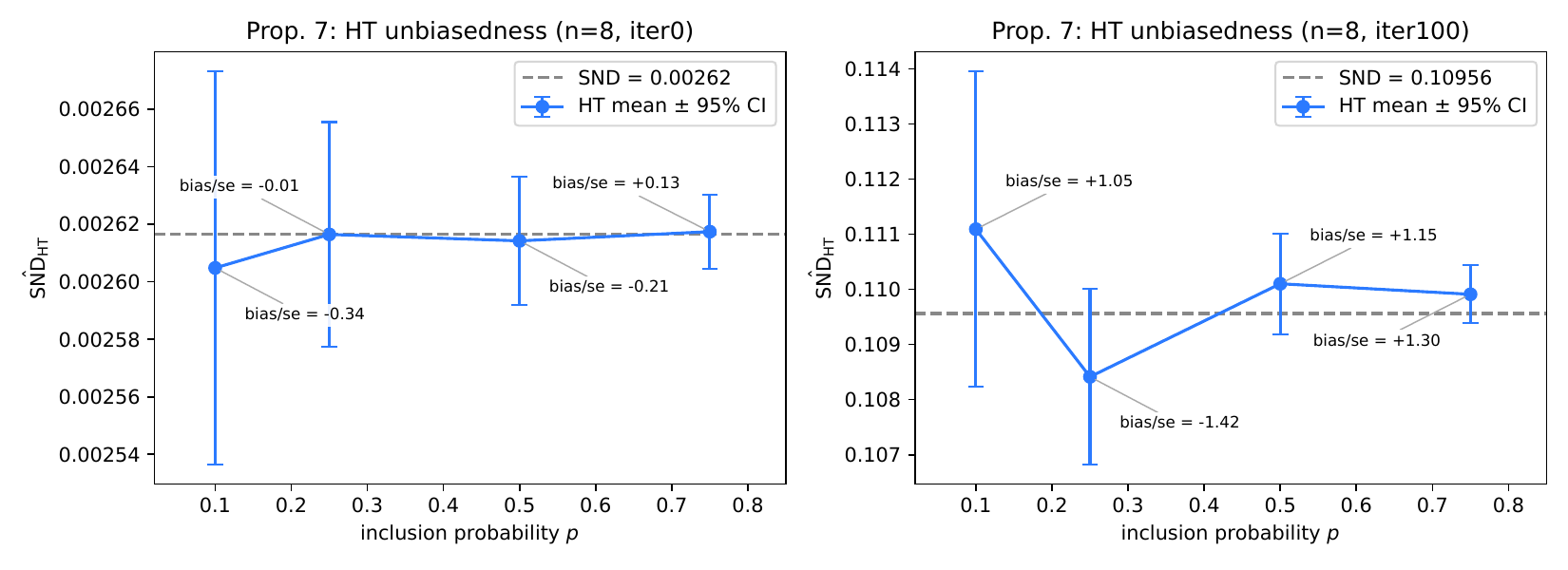}
\caption{Proposition~\ref{prop:unbiased}: empirical unbiasedness of
$\widehat{\SND}_{\mathrm{HT}}(G_p)$ at $n=8$. Blue dots and whiskers are the sample
mean and 95\% confidence interval over 2{,}000 random graphs per $p$;
the dashed line is the true $\SND(\mathbf{D})$. Annotations give the
standardized deviation $|\mathrm{bias}|/\mathrm{SE}$, all of which are
well below $1.96$. Left: iter~0 checkpoint. Right: iter~100 checkpoint.}
\label{fig:unbiasedness}
\end{figure}

\paragraph{Concentration (Theorem~\ref{thm:concentration}).} We
evaluate Theorem~\ref{thm:concentration} at $n{=}16$
($|\Pairs|{=}120$) by drawing $2{,}000$ uniform-size edge samples of
each size $m \in \{6, 12, 24, 48, 72, 96\}$ ($5\%$ to $80\%$ of all
pairs), computing $\SNDG^{\mathrm{u}}$ on each, and comparing the
empirical distribution of $|\SNDG^{\mathrm{u}} - \SND|$ against theory
at $\delta{=}0.10$ (Figure~\ref{fig:concentration}). Across all $12$
cells, the Hoeffding violation rate is $0.00$: no draw out of
$2{,}000$ exceeds $t_H$, well under the allowed $\delta$; the same
holds for the Serfling refinement of Remark~\ref{rem:serfling}. The
empirical curve sits well below both bounds, as expected for
distribution-free guarantees that are worst-case tight only at
population variance $D_{\max}^2/4$; it still exhibits the predicted
$O(1/\sqrt{m})$ slope on the log-log axis, and Serfling is uniformly
tighter than Hoeffding with the gap widening at large sampling
fractions.

\begin{figure}[t]
\centering
\includegraphics[width=\linewidth]{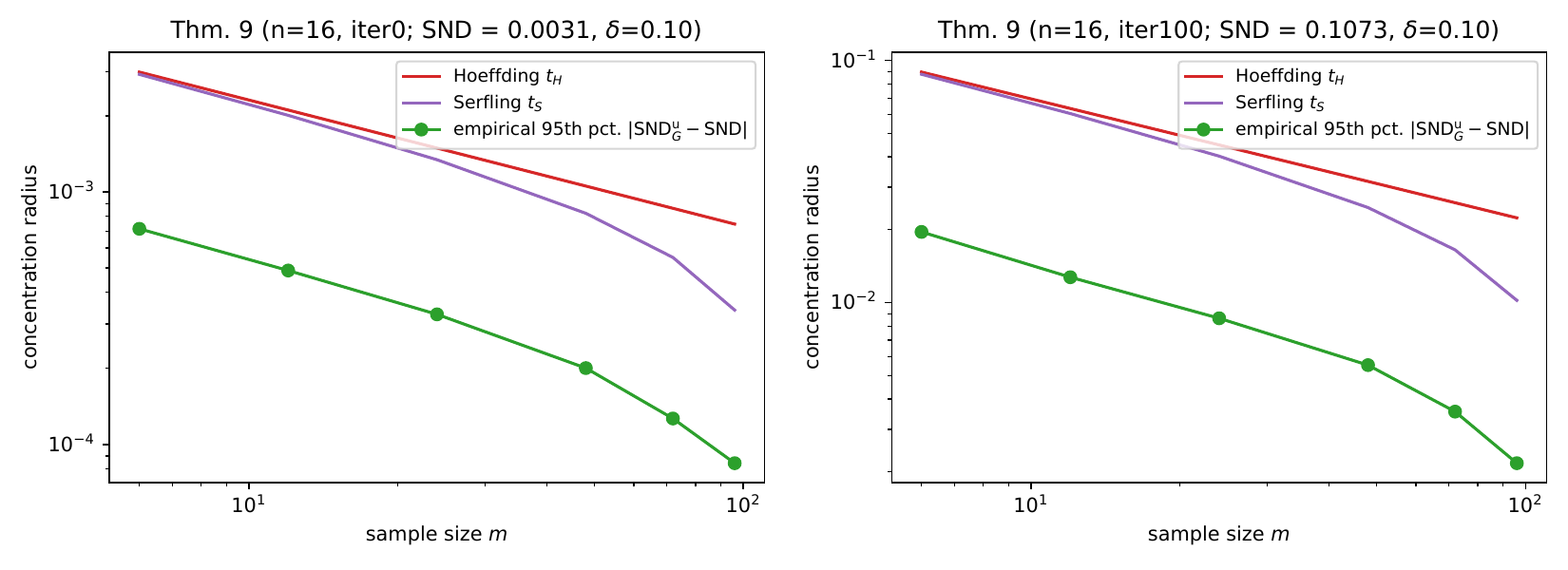}
\caption{Theorem~\ref{thm:concentration}: concentration of
$\SNDG^{\mathrm{u}}$ around $\SND$ at $n=16$. Red: Hoeffding radius
$t_H$ from Theorem~\ref{thm:concentration}. Purple: Serfling radius
$t_S$ from Remark~\ref{rem:serfling}. Green: empirical 95th-percentile
of $|\SNDG^{\mathrm{u}} - \SND|$ over 2{,}000 uniform-size edge
samples per $m$. Both axes are log-scaled. Across all 12 cells, zero
draws violate either bound. The empirical gap reflects the fact that
Hoeffding and Serfling are worst-case over populations bounded in
$[0, D_{\max}]$; the true $d(i,j)$ variance is much smaller.}
\label{fig:concentration}
\end{figure}

\paragraph{Non-VMAS discrete-action transfer (MPE simple-spread + TVD).}
To verify transfer beyond the Gaussian/VMAS regime, we run a PettingZoo
MPE simple-spread panel at $n{=}10$ with discrete actions, categorical
policies, and TVD distance. We train three IPPO seeds with
\texttt{experiments/mpe\_ippo\_training.py}
($500$ iterations, $20$ rollouts/iteration), then run
\texttt{experiments/mpe\_measurement\_panel.py} with $2{,}000$ draws per
seed for three sparse graphs (Bernoulli-$0.1$, Bernoulli-$0.25$,
expander-$d2$) and full SND. We report means across seeds, plus the max
absolute bias over the three seeds. Table~\ref{tab:mpe-tvd-panel} shows
that sparse estimators remain close to full SND (max
$|\mathrm{bias}|=2.92\times10^{-3}$) while reducing metric wall-clock
cost by $1.97$ to $4.87\times$. A random-init baseline panel (same seeds,
same draw budget, no checkpoint load) shows near-zero violation rates
and similarly small bias. This is a transfer/robustness check for the
graph aggregation layer, not a claim of task-level MPE training
superiority.

\begin{table}[t]
\centering
\small
\begin{tabular}{llcccc}
\toprule
\textbf{Regime} & \textbf{Estimator} & \textbf{Mean viol.} & \textbf{Max $|\mathrm{bias}|$} & \textbf{Mean ms} & \textbf{Speedup} \\
\midrule
trained & Bernoulli-$0.1$ & 0.6252 & $2.917{\times}10^{-3}$ & 0.063 & $4.87\times$ \\
trained & Bernoulli-$0.25$ & 0.3757 & $1.306{\times}10^{-3}$ & 0.107 & $2.86\times$ \\
trained & Expander-$d2$ & 0.2302 & $0.703{\times}10^{-3}$ & 0.158 & $1.97\times$ \\
\midrule
random-init & Bernoulli-$0.1$ & 0.0043 & $0.326{\times}10^{-3}$ & 0.063 & $4.86\times$ \\
random-init & Bernoulli-$0.25$ & 0.0000 & $0.184{\times}10^{-3}$ & 0.106 & $2.86\times$ \\
random-init & Expander-$d2$ & 0.0000 & $0.274{\times}10^{-3}$ & 0.143 & $2.13\times$ \\
\bottomrule
\end{tabular}
\caption{PettingZoo MPE simple-spread transfer panel ($n{=}10$,
categorical policies, TVD distance, three seeds, $2{,}000$ draws/seed).
`Mean viol.' is the fraction of draws with
$|\SNDG^{\mathrm{u}}-\SND|>0.05$ averaged across seeds, a
fixed-threshold diagnostic, not the Hoeffding violation rate (which
requires the population-dependent bound from
Theorem~\ref{thm:concentration}). Because TVD lies in $[0,1]$, the
$0.05$ threshold is loose relative to typical Hoeffding radii at
$\delta{=}0.1$; the small max $|\mathrm{bias}|$ column is the more
informative estimator-quality metric. `Speedup' is relative to full SND
wall-clock in the same regime.}
\label{tab:mpe-tvd-panel}
\end{table}

\paragraph{CPU wall-clock sweep at $n\in\{4, 8, 16\}$.} As a
pre-condition to the larger-$n$ GPU results reported in
Section~\ref{sec:experiments:scaling}, we run the same
$\SNDG$-vs-$\SND$ timing comparison on the iter-$100$ checkpoint
across $n \in \{4, 8, 16\}$ and $p \in \{0.1, 0.25, 0.5, 0.75, 1.0\}$,
$20$ trials per cell, CPU only. Figure~\ref{fig:timing} reports
speedup and absolute times: at $p{=}0.1$ we observe
$10$ to $17\times$ speedups across $n$, decaying monotonically to
$\approx 1\times$ at $p{=}1.0$, consistent with the
$\binom{n}{2}/\E[|E(G_p)|]=1/p$ analytical prediction modulo
small-$n$ vectorization overhead.

\begin{figure}[t]
\centering
\includegraphics[width=\linewidth]{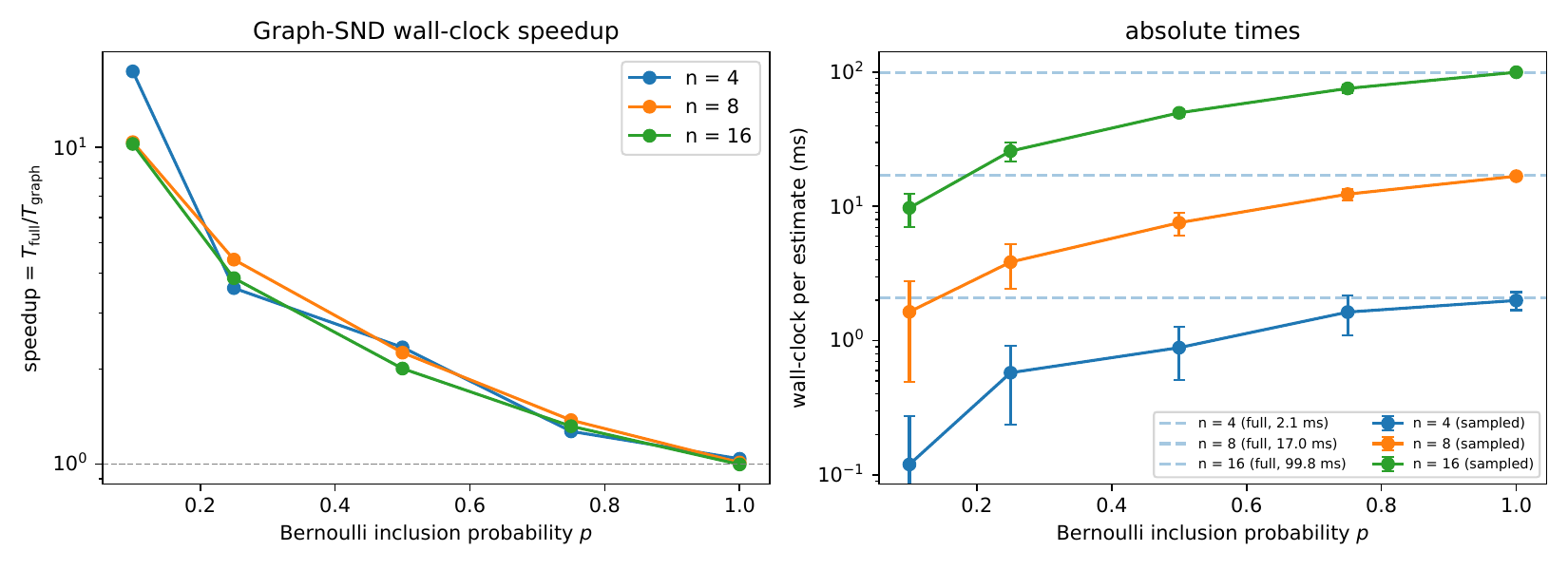}
\caption{Proposition~\ref{prop:complexity}: wall-clock scaling of
$\SNDG$ versus $\SND$ (CPU, $n\in\{4,8,16\}$).
Left: speedup $T_{\mathrm{full}} / T_{\mathrm{sampled}}$ against
Bernoulli inclusion probability $p$, one line per team size $n$.
Right: absolute wall-clock time per estimate with error bars over 20
trials. Dashed horizontal lines mark the full-$\SND$ cost at each
$n$; solid lines are the sampled $\SNDG$ cost. The red curve in the
right panel of Figure~\ref{fig:timing-n500} plots the $p=0.1$
speedups reported here.}
\label{fig:timing}
\end{figure}

\paragraph{Graph construction microbenchmark.}
Because Proposition~\ref{prop:complexity} counts pairwise distance
evaluations and excludes graph construction, we separately time the
CPU-side graph builders used by the experiments. Table~\ref{tab:graph-build}
reports median wall-clock over $50$ trials for $n\in\{50,100\}$ and
$20$ trials for $n=500$ at the same edge budgets used in the expander
ablation. Even exact $k$-NN construction remains below $5$ ms at
$n=500$ in this implementation, far below the full pairwise metric time
reported in Section~\ref{sec:experiments:scaling}; Bernoulli timings in
Figure~\ref{fig:timing-n500} already include sampling and edge-transfer
overhead.

\begin{table}[t]
\centering
\small
\begin{tabular}{lccc}
\toprule
\textbf{Graph builder} & \textbf{$n=50$} & \textbf{$n=100$} & \textbf{$n=500$} \\
\midrule
Bernoulli & $0.016$ ms & $0.036$ ms & $0.875$ ms \\
Uniform-size sample & $0.014$ ms & $0.022$ ms & $0.440$ ms \\
Random $d$-regular & $0.159$ ms & $0.345$ ms & $2.352$ ms \\
Exact $k$-NN & $0.370$ ms & $0.734$ ms & $4.091$ ms \\
\bottomrule
\end{tabular}
\caption{Median CPU graph-construction time at matched edge budgets
$|E|\approx nd/2$, with $d\in\{6,7,9\}$ for
$n\in\{50,100,500\}$. These timings isolate graph construction only;
metric-call timings are reported separately in
Figure~\ref{fig:timing-n500}.}
\label{tab:graph-build}
\end{table}

\section{DiCo head-to-head at $n{=}50$: Bernoulli-$0.1$ Graph-SND vs full SND}
\label{app:dico-n50-sweep}

We extend the closed-loop experiment of
Section~\ref{sec:experiments:dico-dispersion} along three axes: team
size (from $n{=}10$ to $n{=}50$, ${\approx}5\times$ more agents), the
diversity set point (from the single $\SND_{\mathrm{des}}{=}0.1$ used
in the main text to three values
$\SND_{\mathrm{des}}\in\{0.12,0.14,0.15\}$), and \emph{the choice of
estimator itself}: at every $(\mathrm{seed},\SND_{\mathrm{des}})$
cell we run the same $167$-iteration IPPO training twice, once with
Bernoulli-$0.1$ Graph-SND and once with the full SND estimator
(i.e.\ the unmodified DiCo codepath on $K_n$), yielding an
$18$-run ($2{\times}3{\times}3$) head-to-head at matched seeds and
matched hyperparameters. Hyperparameters match
Section~\ref{sec:experiments:dico-dispersion} apart from
$\texttt{on\_policy\_n\_envs\_per\_worker}{=}120$ and
$\texttt{on\_policy\_collected\_frames\_per\_batch}{=}12{,}000$
(reduced from $600$ and $60{,}000$ to keep the per-iteration rollout
within a single RTX~4090's memory budget at $5{\times}$ more
per-agent tensors); the two estimators see bit-identical rollouts
and differ only in the diversity aggregation on each PPO forward.
The expected edge count fed to the controller under the Bernoulli
variant is $\mathbb{E}[|E(G_{0.1})|]{=}0.1\binom{50}{2}{=}122.5$
against $\binom{50}{2}{=}1{,}225$ under full SND ($10\%$ edge
density), a nominal $10\times$ reduction in per-call metric work.

Figure~\ref{fig:dico-n50-bern-vs-full} summarizes the nine paired
runs along four axes. Panel~(a) overlays the per-iteration applied
$\SND{=}\SND_t\,s_t$ trajectories for the two estimators at each set
point; after a ${\sim}25$-iteration transient every cell settles
onto its target line and both estimators produce visually
overlapping bands for the remaining ${\sim}140$ iterations of
training. Panel~(b) shows the corresponding per-iteration episode
reward: the two estimators produce reward curves within observed
seed-to-seed variation across all three set points, with higher set points
yielding monotonically higher terminal reward (consistent with
Dispersion rewarding role specialization among a large team).
Panel~(c) quantifies steady-state tracking: the late-window
($50$-iteration tail) relative error
$|\mathrm{applied}\,\SND - \SND_{\mathrm{des}}|/\SND_{\mathrm{des}}$
is $0.42$ to $0.53\%$ for Bernoulli-$0.1$ and $0.31$ to $0.37\%$ for
full SND, with full SND slightly tighter than Bernoulli-$0.1$ in
every cell; this is expected, since full SND is the ground truth and
Bernoulli-$0.1$ queries only ${\sim}10\%$ of pairs. Both estimators
remain well under the $1\%$ floor of the controller. Panel~(d)
isolates per-call metric cost: Bernoulli-$0.1$ averages
${\sim}2.0\,\mathrm{ms}$/call, full SND averages
${\sim}18.9\,\mathrm{ms}$/call, a $9.24$ to $9.63\times$ ratio that
tracks the analytical $\binom{n}{2}/\mathbb{E}[|E|]{=}10$ prediction
of Proposition~\ref{prop:complexity} nearly exactly. Because full
SND iters take $52.4\,\mathrm{s}$ end-to-end at this $n$ and batch
size, dominated by rollout collection and PPO updates rather than
by the metric; the absolute wall-clock savings at $n{=}50$ are a
few percent of training time; the same ${\sim}10{\times}$
metric-call speedup translates into larger end-to-end savings as the
metric's share of per-iteration time grows with $n$ (see
Figure~\ref{fig:timing-n500}).

Table~\ref{tab:dico-n50-bern-vs-full} reports the numerical summary
aggregated across three seeds per cell. Applied $\SND$ is within
$0.6\%$ of the target for both estimators at all three set points;
cross-seed standard error on applied $\SND$ is at most
$1.2\times 10^{-4}$ (Bernoulli-$0.1$) and $1.0\times 10^{-4}$
(full). Across the nine matched $(\mathrm{seed},\SND_{\mathrm{des}})$
cells, the paired mean reward difference (Bernoulli minus full) is
$-6.2{\times}10^{-4}\pm9.2{\times}10^{-3}$ SEM, with a two-sided sign
test $p{=}1.0$. The head-to-head
thus upgrades the closed-loop conclusion of
Section~\ref{sec:experiments:dico-dispersion} from ``matches full
$\SND$ at $n{=}10$'' to ``matches full $\SND$ at $n{=}50$ over three
distinct set points, with full SND actually trained to convergence
on each cell'', and does so at the $9{-}10\times$ lower per-call
metric cost predicted by the cost invariant. In other words, at
$n{=}50$ the central substitution claim is demonstrated directly
rather than extrapolated: DiCo with Bernoulli-$0.1$ Graph-SND is a
drop-in replacement for DiCo with full SND.

\begin{figure}[t]
\centering
\includegraphics[width=\linewidth]{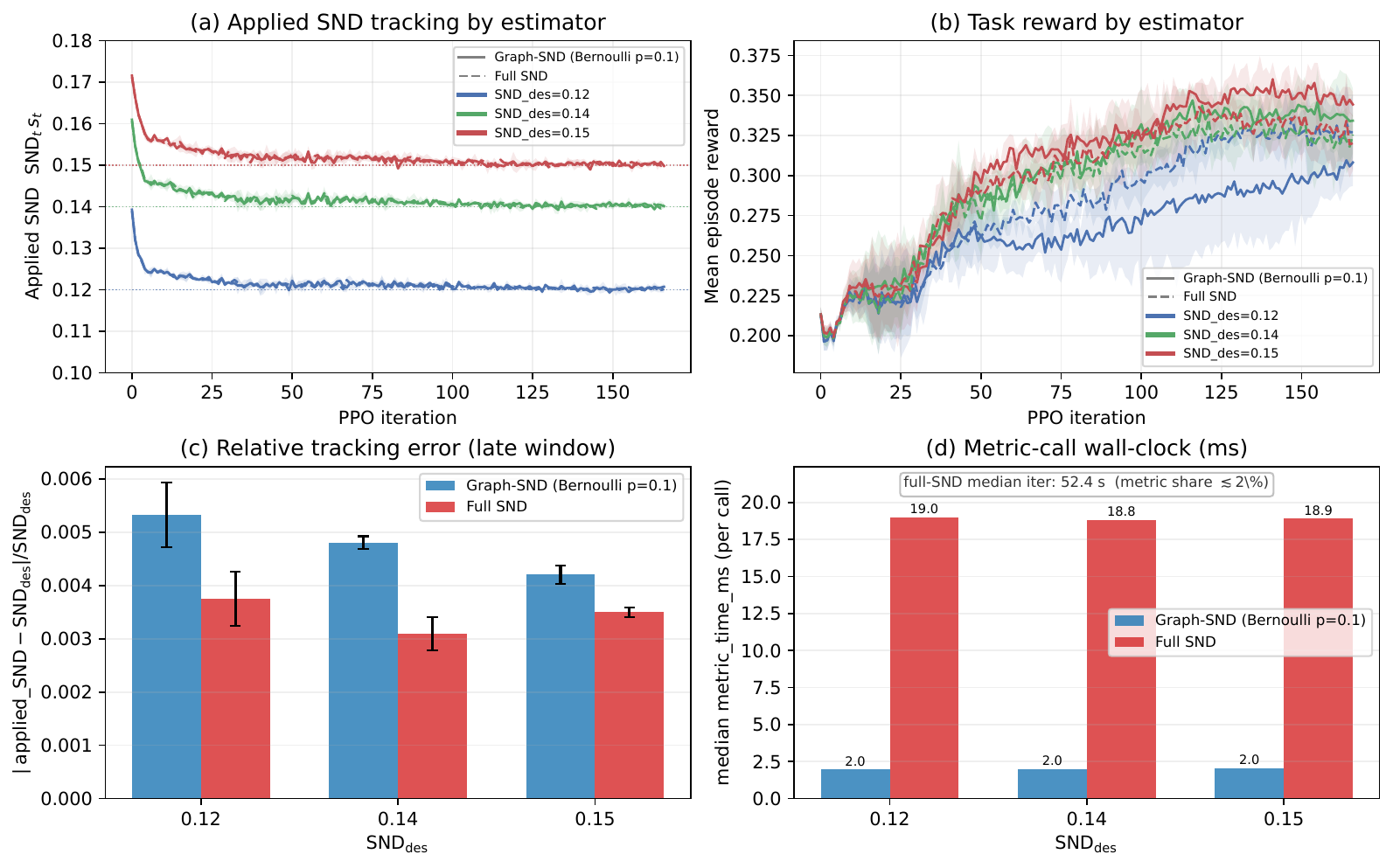}
\caption{DiCo head-to-head at $n{=}50$, $3{\times}3$ grid of seeds
and set points, each cell trained twice with matched rollouts (one
Bernoulli-$0.1$ Graph-SND, one full SND), $18$ runs total.
\textbf{(a)}~Applied $\SND{=}\SND_t\,s_t$ per iteration; color
encodes $\SND_{\mathrm{des}}\in\{0.12,0.14,0.15\}$, solid lines are
Bernoulli-$0.1$, dashed lines are full SND; bands are $\pm 1$ std
across seeds. \textbf{(b)}~Mean episode reward on the same runs,
same color/line encoding. \textbf{(c)}~Late-window ($50$-iter tail)
relative tracking error; full SND is slightly tighter than
Bernoulli-$0.1$ but both remain well under $1\%$.
\textbf{(d)}~Median per-call metric wall-clock; full SND median
iter-time of $52.4\,\mathrm{s}$ is annotated to contextualise the
${\sim}9.5\times$ metric speed-up (which is ${\lesssim}2\%$ of
per-iter time at $n{=}50$ but grows with $n$).}
\label{fig:dico-n50-bern-vs-full}
\end{figure}

\begin{table}[ht]
\centering
\caption{DiCo at $n{=}50$ head-to-head, aggregated across three
seeds on the late window (last $50$ iterations). ``Applied $\SND$''
and ``reward'' are mean $\pm$ cross-seed standard error (SEM);
``rel.\ track.\ err.'' is mean absolute tracking error divided by
$\SND_{\mathrm{des}}$; ``metric time'' is the cross-cell median of
per-run median per-call times for the named estimator.
Bernoulli-$0.1$ has expected edge count $122.5$ out of
$\binom{50}{2}{=}1{,}225$; full SND evaluates all $1{,}225$ pairs.}
\label{tab:dico-n50-bern-vs-full}
\small
\resizebox{\linewidth}{!}{% Auto-generated by experiments/n50_bern_vs_full_comparison.py -- do not edit by hand.
\begin{tabular}{l cc cc cc cc}
\toprule
 & \multicolumn{2}{c}{Applied SND} & \multicolumn{2}{c}{Reward} & \multicolumn{2}{c}{Rel.\ tracking err.} & \multicolumn{2}{c}{Metric time (ms)} \\
$\mathrm{SND}_{\mathrm{des}}$ & Bern-0.1 & Full & Bern-0.1 & Full & Bern-0.1 (\%) & Full (\%) & Bern-0.1 & Full \\
\midrule
0.12 & 0.12032$\pm$0.00012 & 0.12026$\pm$0.00010 & 0.295$\pm$0.016 & 0.3251$\pm$0.0080 & 0.53 & 0.37 & 1.97 & 19.0 \\
0.14 & 0.140354$\pm$0.000073 & 0.140166$\pm$0.000025 & 0.3384$\pm$0.0083 & 0.3268$\pm$0.0072 & 0.48 & 0.31 & 2.00 & 18.8 \\
0.15 & 0.150261$\pm$0.000011 & 0.150301$\pm$0.000015 & 0.3498$\pm$0.0052 & 0.3329$\pm$0.0086 & 0.42 & 0.35 & 2.04 & 18.9 \\
\bottomrule
\end{tabular}
}
\end{table}

To rule out the possibility that the sparse controller merely tracks
its own sparse feedback signal, we repeat the same $18$-cell grid with
an audit callback that computes full complete-graph SND on the scaled
actions at every iteration. This post-hoc signal is never fed back into
the controller. For the Bernoulli-$0.1$ arm, post-hoc full-SND relative
tracking error is $0.43$ to $0.61\%$ across the three set points; for the
full-SND arm it is $0.36$ to $0.48\%$. The mean post-hoc complete-graph
SND exceeds the online applied signal by only
$2.9{\times}10^{-4}$ to $4.7{\times}10^{-4}$, confirming that the
sparse-controlled policies also match the intended full-SND diversity
level.

\begin{figure}[ht]
\centering
\includegraphics[width=\linewidth]{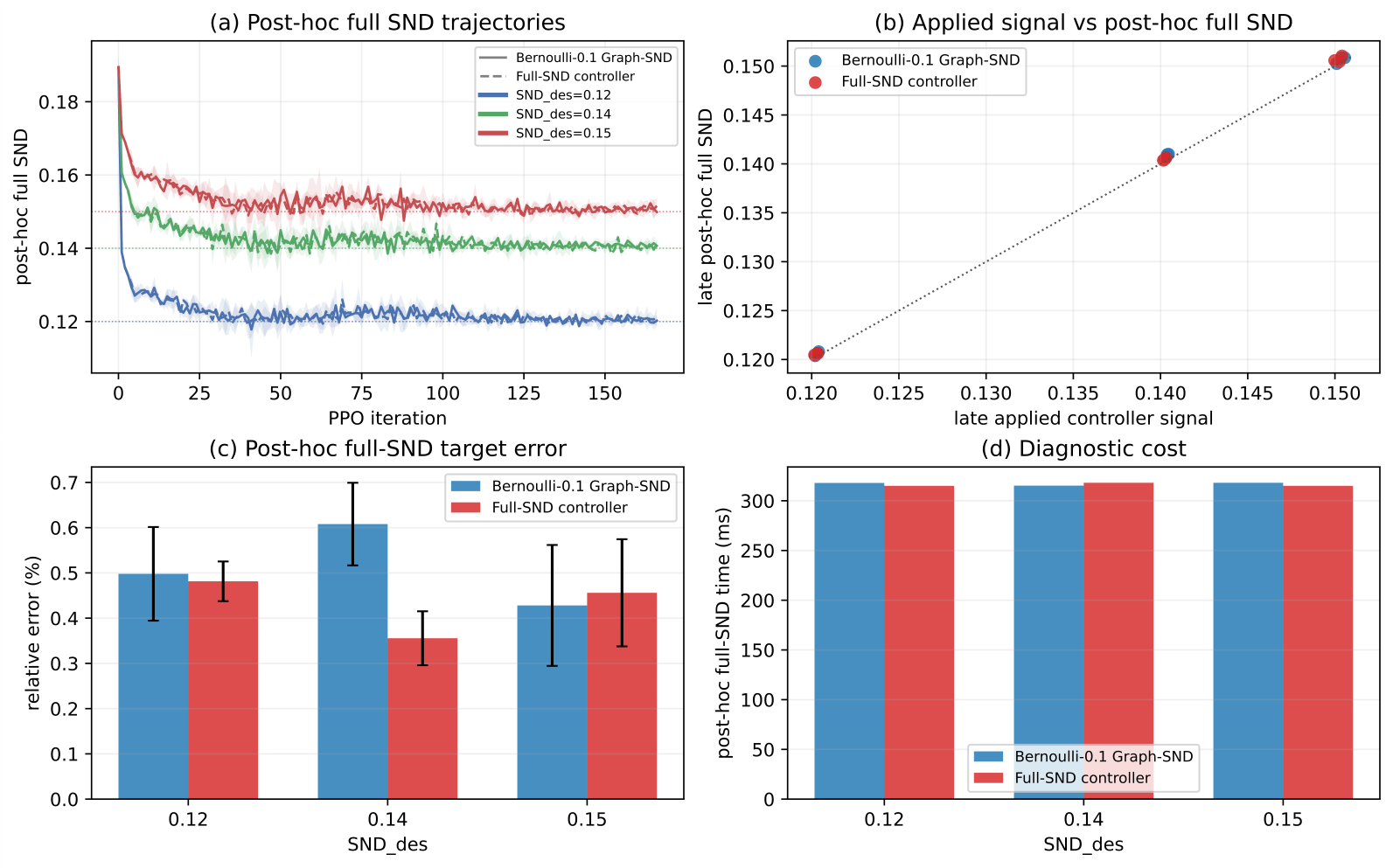}
\caption{Post-hoc complete-graph SND audit for the $n{=}50$ DiCo
head-to-head. The audit computes full SND on the scaled actions every
iteration while leaving the online controller unchanged. Bernoulli-$0.1$
tracks the desired diversity under this complete-graph measurement to
within $0.61\%$ relative error, so the sparse controller is not merely
matching its own sparse feedback signal.}
\label{fig:dico-n50-posthoc-full-snd}
\end{figure}

\section{Expander ablation: supplementary panels}
\label{app:expander-extra}

\begin{figure}[t]
\centering
\includegraphics[width=\textwidth]{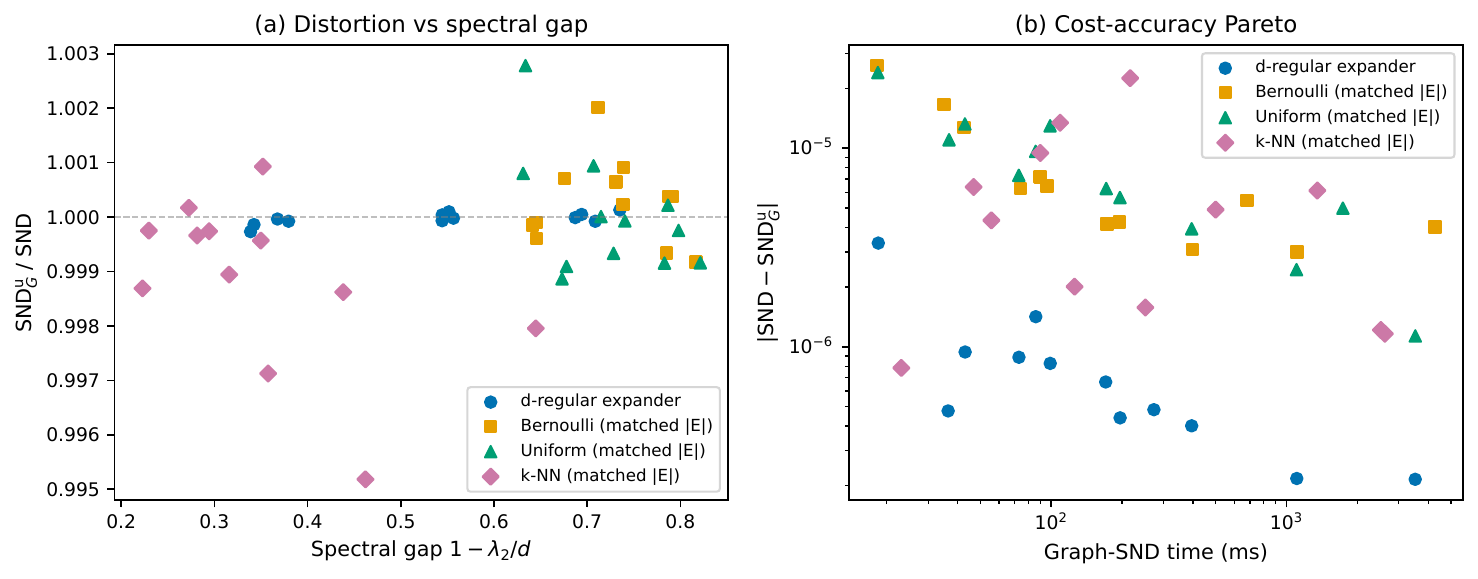}
\caption{Continuation of Figure~\ref{fig:expander-distortion} with additional panels.
\textbf{Expander sparsification ablation (supplementary).}
\textbf{(a)}~Empirical ratio $\SNDG^{\mathrm{u}}/\SND$ vs spectral gap
$1{-}\lambda_2/d$ across four graph families.
\textbf{(b)}~Absolute distortion $|\SND{-}\SNDG^{\mathrm{u}}|$ vs
graph-SND wall-clock time, showing the cost-accuracy Pareto frontier.}
\label{fig:expander-distortion-appendix}
\end{figure}

\begin{figure}[t]
\centering
\includegraphics[width=\textwidth]{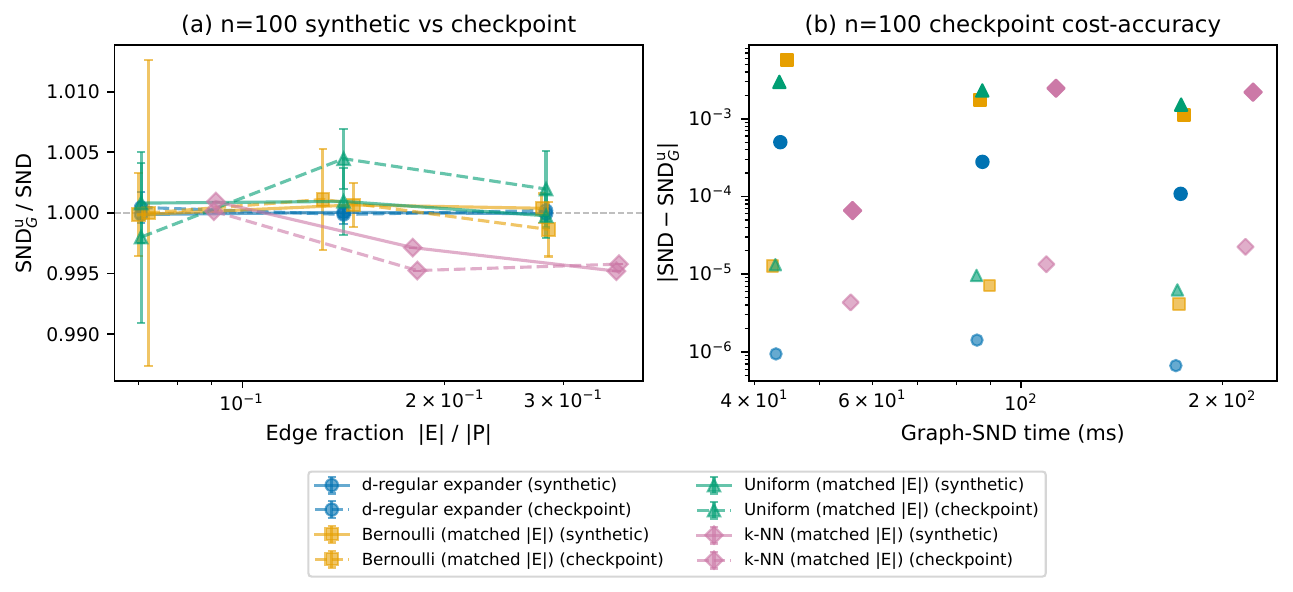}
\caption{Continuation of Figure~\ref{fig:expander-distortion} with additional panels.
\textbf{Expander ablation, trained-policy transfer at $n{=}100$
(supplementary).}
\textbf{(a)}~At matched edge budgets, the distortion ratio
$\SNDG^{\mathrm{u}}/\SND$ for synthetic rollouts (solid) and a trained
checkpoint rollout (dashed) remains near~$1$, with the $d$-regular
family staying most tightly concentrated around unit ratio.
\textbf{(b)}~Cost--accuracy points for synthetic vs trained rollouts:
absolute distortion shifts upward under the larger trained-policy
$\SND$ scale, while the graph-family ordering and timing structure
remain consistent with the structural mechanism of
Theorem~\ref{thm:distortion}.}
\label{fig:expander-distortion-checkpoint}
\end{figure}

% ============================================================
% NeurIPS Paper Checklist (submission/OpenReview only; omit from arXiv).
\ifincludeneuripschecklist
  \clearpage
  \input{checklist}
\fi

\end{document}